\newtheorem{theorem}{Theorem}[section]
\newtheorem{lemma}[theorem]{Lemma}
\newtheorem{definition}[theorem]{Definition}
\renewcommand{\eqref}[1]{Eq.~(\ref{eq:#1})}
\newcommand{\figref}[1]{Figure~\ref{fig:#1}}
\newcommand{\tabref}[1]{Table~\ref{tab:#1}}        
\newcommand{\secref}[1]{Section~\ref{sec:#1}}
\newcommand{\thmref}[1]{Theorem~\ref{thm:#1}}
\newcommand{\lemref}[1]{Lemma~\ref{lem:#1}}
\newcommand{\appref}[1]{Appendix~\ref{ap:#1}}
\newcommand{\algref}[1]{Alg.~\ref{alg:#1}}
\renewcommand{\P}{\mathbb{P}}
\newcommand{\E}{\mathbb{E}}
\newcommand{\reals}{\mathbb{R}}
\newcommand{\nats}{\mathbb{N}}
\newcommand{\floor}[1]{\left\lfloor #1\right\rfloor}
\DeclareMathOperator*{\argmin}{argmin}
\newcommand{\cA}{\mathcal{A}}
\newcommand{\cB}{\mathcal{B}}
\newcommand{\comment}[1]{\ensuremath{\triangle~}\emph{#1}}
\newcommand{\OPT}{\mathrm{OPT}}
\newcommand{\phia}{\phi_{\alpha}}
\newcommand{\ka}{{k_+}}
\newcommand{\smallc}{I_{\mathrm{small}}^\alpha}
\newcommand{\smallcmax}{I_{\mathrm{small}}^{\alpham}}
\newcommand{\largec}
{I_{\mathrm{large}}^\alpha}
\newcommand{\smallp}
{C_{\mathrm{small}}^\alpha}
\newcommand{\smallpmax}
{C_{\mathrm{small}}^{\alpham}}
\newcommand{\largep}
{C_{\mathrm{large}}^\alpha}
\newcommand{\largepmax}
{C_{\mathrm{large}}^{\alpham}}
\newcommand{\ca}{T_{\alpha}}
\newcommand{\eo}{\psi}
\newcommand{\Tout}{T_{\mathrm{out}}}
\newcommand{\tr}{\tau}
\newcommand{\iflarge}{M}
\newcommand{\bins}{\cB}
\newcommand{\binsa}{\cB_a}
\newcommand{\binsb}{\cB_b}
\newcommand{\binsc}{\cB_c}
\newcommand{\far}{\mathrm{far}}
\newcommand{\fu}{\mathrm{F}_b}
\newcommand{\fl}{\mathrm{F}_c}
\newcommand{\fs}{\mathrm{F}_d}
\newcommand{\event}{E}
\newcommand{\algname}{\texttt{SelectProc}}
\newcommand{\finalalgname}{\texttt{MUNSC}}
\newcommand{\undert}{\texttt{Near}}
\newcommand{\alpham}{{\alpha_{I+1}}}
\newcommand{\phiam}{\phi_{\alpham}}
\newcommand{\manuallabel}[2]{\def\@currentlabel{#2}\label{#1}}
\newcommand{\thealgorithms}{
\begin{algorithm}[t]
\caption{The sequential clustering algorithm: \finalalgname} \label{alg:final_alg}
\begin{algorithmic}[1]
\REQUIRE $\delta \in (0,1)$, $k\in \nats$, $n\in \nats$ (stream length), $\mathcal{A}$ (an offline $k$-median algorithm),\\ sequential access to the input stream.
\STATE $\alpha_1 \leftarrow \delta/(4k)$; For $i \in \nats$, $\alpha_i \leftarrow \alpha_1 \cdot 2^{i-1}$.
 \STATE $I \leftarrow \floor{\log_2(1/(6\alpha_1))}, \delta' \leftarrow \delta/(I+1)$.~~~\comment{$I$ is set so that $\alpha_{I+1} \in (1/12,1/6]$}.

\STATE Prepare $I+1$ copies of \algname, indexed by $i \in [I+1]$, as follows:\\\hspace{1em}In all copies, use inputs $k,n,\mathcal{A}$, $\tr \leftarrow \phiam$, and set the confidence parameter to $\delta'$.\\
\hspace{1em}In copy $i\in [I+1]$, set $\alpha \leftarrow \alpha_i$.\\
\hspace{1em}In all but the last copy, set $\iflarge \leftarrow 1$ and $\gamma \leftarrow 2\alpha_i$.\\

\hspace{1em}In the last copy (index $I+1$), set $\iflarge \leftarrow \log (8\ka/\delta')$, and $\gamma \leftarrow 1-2\alpha_{I+1}$.
\STATE Read each point from the input stream and feed it to each of the copies of \algname. \\If any of the copies selected the point, then select it as a center.
\end{algorithmic}
\end{algorithm}

\begin{algorithm}[t]
\caption{Internal procedure: \algname} \label{alg:basic_algorithm}
\begin{algorithmic}[1]
\REQUIRE $\delta \in (0,1)$, $k\in \nats$, $n\in \nats$, $\mathcal{A}$ (an offline $k$-median algorithm),\\
sequential access to the input stream.\\
Technical parameters: $\alpha \in (0,\frac{1}{6}]$, $\gamma \in (\alpha,1-2\alpha]$, $\iflarge \in \nats$, $\tr > 0$.
\STATE \comment{\textbf{Phase 1} (Calculate an initial clustering $\ca$)}:
\STATE $P_1 \leftarrow$ the first $\alpha n$  points from the input stream.
\STATE  $\ca := \{c_1,\ldots c_{\ka} \} \leftarrow \cA(P_1,\ka)$ ~~~\comment{Only calculation, no centers are actually selected here.}
\STATE \comment{\textbf{Phase 2} (Estimate the risk of $\ca$ on large optimal clusters)}:
\STATE $P_2\leftarrow$   next $\alpha n$  points  from the input stream. 
\STATE $\eo \leftarrow \frac{1}{3 \alpha } R_{2\alpha(k+1) \phia  }(P_2,\ca)$ ~~~\comment{$\eo$ estimates the risk of $\ca$ on large optimal clusters}. \label{line-P2}

\STATE \comment{\textbf{Phase 3} (Select centers)}:
\STATE $\forall i \in [\ka], \,\, n_i \leftarrow 0$, $\undert_i \leftarrow \texttt{FALSE}.$ ~~~\comment{$n_i$ counts selected points associated with $c_i$. \\\hspace{17em}$\undert_i$ indicates if a point close to $c_i$ was selected.}
\FOR{ $\gamma n$ iterations }
\STATE Read the next point $x$ from the input stream.

\STATE $i \leftarrow \argmin_{i \in [\ka]}\rho(x,c_i)$. ~~~\comment{Find the closet center to $x$ in $\ca$.}

\IF{$\rho(x,c_i) > \frac{\eo}{k \tr}$  or $\neg\undert_i$ or $n_i \leq \iflarge$}\label{line-nv-thr}  
\STATE \textbf{Select} $x$ as a center. \label{line-nv-select-x} ~~~~\comment{This is the only line in which actual selections occur.}
  \STATE $n_i \leftarrow n_i+1$. \label{line-nv-increase-n_i}
\STATE \textbf{If}  $\rho(x,c_i) \leq \frac{\eo}{k \tr}$ \textbf{then} $\undert_i \leftarrow \texttt{TRUE}$. \label{line-nv-increasingN_and_bi}
\ENDIF
\ENDFOR
\end{algorithmic}
\end{algorithm}
}
\title{A Constant Approximation Algorithm for Sequential Random-Order No-Substitution $k$-Median Clustering}
\author[1]{Tom Hess}
\author[2]{Michal Moshkovitz}
\author[3]{Sivan Sabato}
\affil[1,3]{Department of Computer Science, Ben Gurion University, Beer-Sheva, Israel}
\affil[2]{Qualcomm Institute, University of California, San Diego, USA}
{
    \makeatletter
    \renewcommand\AB@affilsepx{: \protect\Affilfont}
    \makeatother

    \makeatletter
    \renewcommand\AB@affilsepx{, \protect\Affilfont}
    \makeatother
    \affil[1]{\normalsize \texttt{tomhe@bgu.ac.il}}
    \affil[2]{\normalsize \texttt{mmoshkovitz@eng.ucsd.edu}}
    \affil[3]{\normalsize \texttt{sabatos@cs.bgu.ac.il}}
}
\date{}
\begin{document}

\maketitle

\begin{abstract}
We study $k$-median clustering under the sequential no-substitution setting. In this setting, a data stream is sequentially observed, and some of the points are selected by the algorithm as cluster centers. However, a point can be selected as a center only immediately after it is observed, before observing the next point. In addition, a selected center cannot be substituted later. We give the first algorithm for this setting that  obtains a constant approximation factor on the optimal risk under a random arrival order, an exponential improvement over previous work. This is also the first constant approximation guarantee that holds without any structural assumptions on the input data. Moreover, the number of selected centers is only quasi-linear in~$k$.  Our algorithm and analysis are based on a careful risk estimation that avoids outliers, a new concept of a linear bin division, and a multiscale approach to center selection. 
\end{abstract}

\section{Introduction}\label{sec:intro}
 Clustering is a fundamental unsupervised learning task used for various applications, such as  anomaly detection \citep{leung2005unsupervised}, recommender systems \citep{shepitsen2008personalized} and cancer diagnosis \citep{zheng2014breast}. 

In recent years, the problem of \emph{sequential clustering} has been actively studied, motivated by applications in which data arrives sequentially, such as 
online recommender systems \citep{nasraoui2007performance} and online community detection \citep{aggarwal2003framework}.
 
In this work, we study $k$-median clustering in the sequential \emph{no-substitution} setting, a term first introduced in \cite{hess2020sequential}. In this setting, a stream of data points is sequentially observed, and some of these points are selected by the algorithm as cluster centers. However, a point can be selected as a center only immediately after it is observed, before observing the next point. In addition, a selected center cannot be substituted later. This setting is motivated by applications in which center selection is mapped to a real-world irreversible action. For instance, consider a stream of website users, where the goal is to instantaneously identify users that serve as social cluster centers and provide them with a promotional gift while they are still on the website. As another example, consider recruiting representative participants for a clinical trial out of a stream of incoming patients.

The goal in the no-substitution $k$-median setting is to obtain a near-optimal $k$-median risk value, while selecting a number of centers that is as close as possible to $k$.
For an adversarially ordered stream, it has been shown \citep{moshkovitz2021unexpected} that an algorithm that selects a number of centers that is sublinear in the stream length cannot obtain a constant approximation guarantee, unless some structural assumptions are imposed on the input data.

In this work, we show that in contrast, when the stream order is random, a constant approximation guarantee can be obtained without imposing structural assumptions on the data. Previous works on the random-order setting provide either an algorithm with an almost-constant approximation assuming a bounded metric space \citep{hess2020sequential} or an algorithm with an approximation factor that is exponential in $k$ \citep{moshkovitz2021unexpected}. Thus, our guarantees are  stronger and more general than the previous guarantees. Moreover, the number of centers selected by our algorithm is only quasi-linear in $k$ and does not depend on the stream length.

\paragraph{Main result.} 
We propose a new algorithm, \finalalgname\ (Multiscale No-Substitution Clustering),  
that obtains a constant approximation factor with probability $1-\delta$ over the random order of the stream, while selecting only $O(k \log^2(k/\delta))$ centers.
\finalalgname\ operates by executing several instances of a new center-selection procedure, called \algname, where each instance is applied to a stream prefix of a different scale. 
\algname\ decides which centers to select using a novel technique, in which a small stream prefix is used to estimate a truncated version of the optimal risk of the entire stream. This truncated version ignores outliers, and so can be estimated reliably. The value of the estimate is used to determine which centers to select from the rest of the stream. The multiscale use of \algname\ by \finalalgname\ allows \finalalgname\ to select only a quasi-linear number of centers. 

As part of our analysis, we propose a new concept of a \emph{linear bin division}, which allows
a high-probability association of the risks of stream subsets, while selecting a number of centers that is independent of the stream length. This improves a previous construction of \cite{meyerson2004k}.

The guarantees for \finalalgname\ are provided in our main theorem, \thmref{final-algorithm}, stated in \secref{algorithms}. \tabref{comparison} below compares our guarantees to previous works; See \secref{related_works} for additional details. We note that our algorithm and guarantees are easily extendable to $k$-means clustering, as well as to any other clustering objective that satisfies the weak triangle inequality. \finalalgname\ uses as a black box an offline $k$-median approximation algorithm with a constant approximation factor. Whenever the offline algorithm is efficient \citep[e.g.,][]{charikar2002constant,li2016approximating}, then so is \finalalgname.

\paragraph{Limitations.}
The proposed algorithm is the first constant-approximation no-substitution clustering algorithm that makes no structural assumptions on the input data set. The only limitation is the assumption that the input order is random. This assumption is useful in cases where there is no adversary and any order is as likely, as in the example applications mentioned above. Moreover, a random order is a standard assumption in many streaming settings, satisfied also whenever the input stream is drawn i.i.d.~from a distribution. In some cases, the randomness assumption might hold only approximately. In these cases, we expect a graceful degradation of the guarantees. We leave for future work a comprehensive treatment of input orders that are neither random nor adversarial. We expect the techniques of the current work to serve as a cornerstone in such a treatment.

\paragraph{Paper structure.} We discuss related work in \secref{related_works}. The formal setting and notations are defined in   \secref{setting}. The algorithm and its guarantees are given in \secref{algorithms}. In \secref{analysis}, we outline the proof of the main result. Some parts of the proof are deferred to appendices, which can be found in the supplementary material. We conclude with a discussion in \secref{discussion}.
\section{Related Work} \label{sec:related_works}
Several works have studied settings related to the no-substitution $k$-median clustering setting. \tabref{comparison} summarizes the upper bounds mentioned below.
First, some works studied related settings under an adversarial arrival order.

\cite{liberty2016algorithm} studied online $k$-means clustering, in which centers are sequentially selected, and each observed
point is allocated to one of the previously selected centers. 
The proposed algorithm can be applied to the no-substitution
setting, yielding an approximation factor of $O(\log(n))$, where $n$ is the stream length. For this algorithm to select a sublinear number of centers, the aspect ratio of the input data must be bounded.
\cite{bhaskara2020robust} studied the same setting as \cite{liberty2016algorithm}, improving the approximation factor to a constant, under the same assumptions. 
 \cite{bhattacharjee2020no} explicitly studied the no-substitution setting, and  provided an approximation factor of $O(k^3)$, under a different assumption on the input data set. 

 \newcommand{\adv}{adversarial}
 \begin{table}[t]
     \caption{Comparing our guarantees to previous works. $n$ is the stream length. Abbreviations: LSS16: \cite{liberty2016algorithm}, BR20:  \cite{bhaskara2020robust}, BM20: \cite{bhattacharjee2020no} Mo21: \cite{moshkovitz2021unexpected}, HS20: \cite{hess2020sequential}.}
  \label{tab:comparison}
   \begin{center}
  \begin{tabular}{lllll}
    Reference & arrival  & assumptions & approximation & number    \\
    & order & & factor & of centers  \\
   \toprule
    LSS16 & \adv & bounded aspect ratio & $O(\log (n))$ & $O(k\log^2(n))$  \\
    \midrule
    BR20 & \adv & bounded aspect ratio & constant & $O(k\log^2(n))$ \\
    \midrule
    BM20 & \adv & data properties & $O(k^3)$ & $O(k \log(k) \log (n))$ \\
    \midrule
    Mo21 &  random & none &  exponential in $k$ & $O(k^5)$ or $O(k \log (n))$ \\
    \midrule
    HS20 & random & bounded diameter &  constant$+$additive & $k$ \\
    \midrule
    This work & random & none & constant & $O(k \log^2(k))$\\
     \toprule                                                                   
  \end{tabular}
\end{center}
\end{table}

The setting of a random arrival order has been studied in several recent works.
\cite{hess2020sequential} proposed an algorithm that selects exactly $k$ centers. They obtained a constant approximation factor, with an additional additive term that vanishes for large streams, under the assumption that the metric space has a bounded diameter. Their guarantee is with high probability.
\cite{moshkovitz2021unexpected} obtained an approximation factor that is exponential in $k$, with a number of centers that is linear in $k$ and logarithmic in the stream length $n$. 
Assuming a known stream length, the same work also obtained an approximation factor which is exponential in $k$ while selecting $O(k^5)$ centers. Both guarantees hold only with a constant probability.  As can be seen in \tabref{comparison}, to date, the only known constant approximation algorithms for no-substitution $k$-median clustering impose structural assumptions on the input data.

A related sequential clustering setting is the streaming setting \cite[e.g.,][]{guha2000clustering,ailon2009streaming, chen2009coresets, ackermann2012streamkm++,braverman2016new}, in which the main restriction is the amount of memory available to the algorithm. This setting allows substituting selected centers, but algorithms in this setting can be used in the no-substitution setting, by collecting all the centers ever selected. However, we are not aware of any algorithm in this setting with a competitive bound on the total number of selected centers.

\section{Setting and Notation} \label{sec:setting}

For an integer $i$, denote $[i] := \{1,\ldots,i\}$.
Let $(X,\rho)$ be a finite metric space, where $X$ is a set of size $n$ and $\rho:X \times X \rightarrow \reals_+$ is a metric. For a point $x \in X$ and a set $T \subseteq X$, we denote $\rho(x,T) := \min_{y \in T} \rho(x,y)$. 
For an integer $k  \geq 2$, a $k$-clustering of $X$ is a set of (at most) $k$ points from $X$ which represent cluster centers. Throughout this work, whenever an item is selected based on minimizing $\rho$, we assume that ties are broken based on a fixed arbitrary ordering. Given a set $S \subseteq X$, the \emph{$k$-median risk} of $T$ on $S$ is  $R(S,T):= \sum_{x \in S} \rho(x, T)$. The \emph{$k$-median clustering problem} aims to select a $k$-clustering $T$ of $X$ with a minimal overall risk $R(X,T)$. We denote by $\OPT$ an optimal solution to this problem: $\OPT \in \argmin_{T  \subseteq X, |T| \leq k}R(X,T)$. 
 In the \emph{sequential no-substitution} $k$-median setting that we study, $X$ is not known a priori. The points from $X$ are presented to the algorithm sequentially, in a random order. We assume that the stream length, $n$, is provided as input to the algorithm; see \secref{discussion} for a discussion on supporting an unknown stream length. The algorithm may select an observed point as a center only before observing the next point. Any selected point cannot later be removed or substituted. The goal of the algorithm is to select a small number of centers, such that with a high probability, the overall risk of the selected set on the entire $X$ approximates the optimal $k$-median risk, $R(X,\OPT)$.
 An \emph{offline $k$-median algorithm} $\cA$ is an algorithm that takes as input a finite set of points $S$ and the parameter $k$, and outputs a $k$-clustering of $S$, denoted $\cA(S,k)$. 
For $\beta \geq 1$, we say that $\cA$ is a \emph{$\beta$-approximation} offline $k$-median algorithm on $(X,\rho)$, if for all input sets $S \subseteq X$, $R(S,\cA(S,k)) \leq \beta \cdot R(S, \OPT_S)$, where $\OPT_S$ is an optimal solution with centers from $S$. Formally, $\OPT_S \in \argmin_{T  \subseteq S, |T| \leq k}R(S,T)$.

\section{The algorithm}
\label{sec:algorithms}
In this section, we describe the proposed algorithm, \finalalgname.
\finalalgname\ receives as input the value of $k$, the confidence level $\delta \in (0,1)$ and the total stream length $n$. We further assume access to some black-box offline $k$-median algorithm $\cA$. The guarantees of \finalalgname\ depend on the approximation factor guaranteed by $\cA$. \finalalgname\ has only sequential access to the input stream.
Our main result is the following theorem, showing that \finalalgname\ obtains a constant approximation factor with a quasi-linear number of centers. A proof sketch of the theorem is given in \secref{analysis}. The full proof is provided in the supplementary.

\begin{theorem}\label{thm:final-algorithm}
  Let $k,n \in \nats$. Let $\delta \in (0,1)$. Let $(X,\rho)$ be a metric space of size $n$. Let $\Tout$ be the set of centers selected by \finalalgname\ for the input parameters $\delta,k,n$. Suppose that the input stream is  a random permutation of $X$, and that the input black-box algorithm $\cA$ is a $\beta$-approximation offline $k$-median algorithm. Then, with a probability at least $1-\delta$,

\begin{enumerate}
\item $|\Tout| \leq O\big(k \log^2(k/\delta)\big)$, and
\item $R(X,\Tout) \leq C\beta \cdot R(X,\OPT)$, where $C > 0$ is a universal constant. 
\end{enumerate}
\end{theorem}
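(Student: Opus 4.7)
The plan is to analyze \finalalgname\ by showing that each of the $I+1 = O(\log(k/\delta))$ copies of \algname\ independently satisfies a high-probability correctness guarantee, and then combining these via a union bound at confidence $\delta' = \delta/(I+1)$. The per-copy guarantee has two parts: a bound on the number of selections, and a bound on how well the selected centers cover a specific band of optimal-cluster sizes. Because the scales $\alpha_i$ cover the range $[\delta/(4k),\,1/6]$ geometrically, every possible optimal-cluster size lies in some band, and the responsibility for approximating it falls to exactly one copy.

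For the cardinality bound (part 1), I would analyze Phase 3 of a single copy. Each selection is triggered by one of three clauses in line~\ref{line-nv-thr}: either $x$ is ``far'' from $T_\alpha$ in the sense $\rho(x,c_i) > \psi/(k\tau)$, or $\undert_i$ is still \texttt{FALSE} (turning on with the first close point), or $n_i \leq \iflarge$ still allows another close selection. The first clause contributes $O(k\tau)$ selections across all $\ka$ centers, because the estimate $\psi$ concentrates (with high probability) around the risk of $T_\alpha$ on a large portion of the remaining stream, so a pigeonhole argument caps by $O(k\tau)$ the number of points whose distance to $T_\alpha$ exceeds $\psi/(k\tau)$. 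The remaining two clauses together contribute $O(\ka \cdot \iflarge)$ selections. Since the last copy has $\iflarge = \Theta(\log(\ka/\delta'))$ while the first $I$ copies have $\iflarge = 1$, and $\tau = \phia$ is polylogarithmic in $k/\delta$, each copy selects $O(k \log(k/\delta'))$ centers; summing over the $O(\log(k/\delta))$ copies yields the claimed $O(k\log^2(k/\delta))$.

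For the risk bound (part 2), my plan is to partition the optimal clusters of $X$ by their sizes and assign each to a ``responsible'' copy. For a cluster $K^*_j$ of $\OPT$ with $|K^*_j| \in (\alpha_i n,\,\alpha_{i+1} n]$, the $i$-th copy is responsible: its Phase-1 sample of $\alpha_i n$ random points contains, with high probability, enough of $K^*_j$ that the offline algorithm $\cA$ returns a center $c \in T_{\alpha_i}$ whose local risk on $K^*_j$ is within a $\beta$ factor of optimal. Then in Phase 3, the first point of $K^*_j$ to arrive within $\psi/(k\tau)$ of $c$ is selected and sets $\undert_c$; by the weak triangle inequality, this selected point is within $O(1)\cdot\rho(c, c^*_j) + \psi/(k\tau)$ of the optimal center $c^*_j$ of $K^*_j$, and the threshold term is absorbed into the per-cluster risk by the choice of $\tau$. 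Summing over all clusters bounds $R(X,\Tout)$ by a constant multiple of $R(X,\OPT)$. Clusters too small to be captured by any of the first $I$ copies are handled by the last copy, where the large $\gamma = 1-2\alpha_{I+1}$ and $\iflarge = \Theta(\log(\ka/\delta'))$ ensure, via a coupon-collector-style argument over the random order, that at least one close point from each such cluster is selected before $n_i$ saturates.

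The main obstacle will be calibrating the truncated-risk estimator $\psi = \frac{1}{3\alpha} R_{2\alpha(\ka+1)\phia}(P_2, T_\alpha)$ so that the Phase-3 threshold $\psi/(k\tau)$ has two competing properties simultaneously: it must be small enough that triggering the $\undert$ clause picks out a point genuinely close to the responsible optimal center (for the risk bound), yet not so small that ``far'' selections explode in number (for the cardinality bound). The truncation level $2\alpha(\ka+1)\phia$ is engineered to discard exactly those losses incurred on clusters too small to be represented at the current scale $\alpha$, so that $\psi$ captures only the ``large-cluster'' risk at scale $\alpha$. Proving that (a) the discarded fraction on a random prefix $P_2$ indeed corresponds to such small-cluster points, and (b) the resulting empirical truncated risk concentrates, up to a constant, around the corresponding truncated risk on the full stream with a bound independent of $n$, is where the linear bin division replaces the weaker construction of \cite{meyerson2004k} and forms the technical core of the analysis.
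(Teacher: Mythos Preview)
Your cardinality argument is essentially on track, but the risk argument has the responsibility assignment \emph{inverted}, and this is a genuine gap. You assign a cluster $K^*_j$ of size in $(\alpha_i n,\alpha_{i+1}n]$ to copy $i$ on the grounds that Phase~1 of copy $i$ samples enough points of $K^*_j$ for $\cA$ to locate it; and you hand the \emph{tiny} clusters to the last copy via a coupon-collector argument with $\iflarge>1$. The paper does the opposite. The last copy, which has the largest Phase~1 and the boosted $\iflarge$, is responsible for the \emph{large} optimal clusters (those of size $\geq \phi_{\alpham}$): its Phase~1 is big enough that $T_\alpha$ approximates them (Lemma~\ref{lem:upper-bound}), and $\iflarge=\Theta(\log(\ka/\delta'))$ guarantees (part~\ref{event-phase3} of $\event$) that among the first $\iflarge$ selections near each $c_j$ one falls below the median distance, so $\Tout$ is close to each such $c_j$ (Lemma~\ref{lem:T'approxca}). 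The \emph{small} clusters (size $<\phi_{\alpham}$) are handled by a completely different mechanism: one shows that with probability $1-\delta/2$ none of the optimal centers $c_i^*$ lands among the first $2\alpha_1 n=n\delta/(2k)$ stream points, so every $c_i^*$ falls in Phase~3 of \emph{some} copy; then Lemma~\ref{lem:all_x_risk_bounded} gives $\rho(c_i^*,\Tout)\leq 2\psi/(k\tau)$ regardless of whether $T_\alpha$ ever learned that cluster, and part~\ref{gen-small} of \thmref{gen-first-algorithm} follows. The multiscale is there so that the union of Phase~3's covers the stream minus a negligible prefix, not so that each scale learns clusters of a matching size.

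Your version cannot work as stated. For the band $(\alpha_i n,\alpha_{i+1}n]$, the expected overlap with a Phase~1 of size $\alpha_i n$ is $\Theta(\alpha_i^2 n)$, which need not be $\Omega(\log(k/\delta))$, so concentration fails and $\cA$ need not place any center near $c^*_j$; moreover, a $\beta$-approximation black box gives no per-cluster guarantee even with enough samples (this is exactly why the paper runs $\cA$ with $\ka>k$ centers and proves Lemmas~\ref{lem:upper-bound1}--\ref{lem:claim-phase1}). And your last-copy argument for tiny clusters has no foothold: a singleton cluster may be arbitrarily far from every $c_j\in T_\alpha$, so ``one close point is selected before $n_i$ saturates'' is vacuous---the point is selected, if at all, via the \emph{far} clause, and the reason that suffices is precisely that its own optimal center is itself in Phase~3 and hence within $2\psi/(k\tau)$ of $\Tout$.
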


We first give, in \secref{algoverview}, a short overview of the structure of  \finalalgname. A core element of the algorithm is the internal procedure \algname, which is used by \finalalgname\ with several different sets of input parameters, creating a multiscale effect that we discuss below.
We explain the details of \algname\ in \secref{desc_basic_alg}. Then, in \secref{desc_final_alg}, we explain the design of \finalalgname.

\thealgorithms

\subsection{Algorithm structure overview}\label{sec:algoverview}

\finalalgname\ is listed in \algref{final_alg}.
It works as follows: It initializes $\Theta(\log(k/\delta))$ copies of the procedure \algname, each with a different set of input parameters, where we define $\phia:= 150\log (32k/\delta)/\alpha$ for $\alpha > 0$. It then iteratively reads the points from the input stream one by one, and feeds each point to each of the copies of \algname, so that each copy sequentially observes the entire input stream. 
Whenever any copy of \algname\ selects a point as a center, this point is selected by \finalalgname\ as a center. 

The internal procedure \algname\ uses a robust risk estimation approach to select a small number of centers. The procedure gets as input several technical parameters, in addition to the input parameters of \finalalgname. We explain the procedure and the meaning of the technical parameters in \secref{desc_basic_alg}. 
The values of the technical parameters set by \finalalgname\ for each of the copies of \algname\ ensure that \finalalgname\ select a small number of centers, while obtaining a constant approximation factor.  This is done using a multiscale approach. We discuss this in detail in \secref{desc_final_alg}.

\subsection{The internal procedure: \algname} \label{sec:desc_basic_alg}
In this section, we present the internal procedure, \algname. It has been observed in previous work \citep{liberty2016algorithm} that knowing in advance the optimal risk on the entire stream allows a successful center selection. 
Under a random arrival order, one may hope that obtaining a good estimate for the optimal risk would be straight-forward. However, since the metric space is unbounded, even a small number of outliers can bias the risk estimate considerably. We overcome this challenge by showing that it suffices to estimate a version of the optimal risk that ignores outliers, and that this version can be estimated reliably from a small prefix of the stream.  Our analysis is based on a distinction between small and large optimal clusters, which we formally define in \secref{analysis} below.

\algname, listed in \algref{basic_algorithm}, uses the following notation.
For two sets $S,T \subseteq X$ and an integer $r$, denote by $\far_r(S,T) \subseteq S$ the set of $r$ points in $S$ that are the furthest from $T$ according to the metric $\rho$. If $|S| < r$, we define $\far_r(S,T) := S$ and call $\far_r(S,T)$ a \emph{trivial far set}. Denote by $R_r(S,T):=R(S\setminus \far_r(S,T),T)$ the risk of $T$ on $S$ after discounting the $r$ points that incur the most risk. 

Let $\ka:= k + 38\log(32k/\delta)$. 
\algname\ receives the same input parameters as \finalalgname. In addition, it requires several technical parameters, denoted $\alpha,\gamma,\iflarge$ and $\tr$. 
We explain the meaning of these parameters in the proper context below.

\algname~works in three phases. For $i \in [3]$, we denote by $P_i$ the set of points that are read in phase $i$. In each of the first two phases, an $\alpha$ fraction of the points in the stream is read. In the last phase, a $\gamma$ fraction of the points in the stream is read. Note that depending on the values of $\alpha$ and $\gamma$, the procedure may ignore a suffix of the stream. 
  The first two phases are
used for calculations. Centers are selected only during the
third phase.

In the first phase, a reference clustering
$\ca$ is calculated using the input offline algorithm $\cA$. Note that the centers in
$\ca$ cannot be selected as centers by \algname, since $\cA$
calculates $\ca$ only after observing all the points in the phase. In the second phase, an estimate $\eo$ of the risk of $\ca$ is calculated.  In the third phase, \algname\ observes points from
the stream one by one, deciding for each one whether it should
be selected as a center.  For each observed point, \algname\ first finds the center $c_i \in \ca$ which is closest to it. A point is then selected as a center if it satisfies one of three conditions: (1) its distance from $c_i$ is more than $\eo/(k \tr)$ (where $\tr$ is one of the input technical parameters); (2) $c_i$ does not yet have $\iflarge$ associated points (where $\iflarge$ is one of the technical parameters); or (3) no point close to $c_i$ has been selected
so far, as maintained by the Boolean variable $\undert_i$. 
In \thmref{gen-first-algorithm} in \secref{analysis}, we provide a guarantee on the output of \algname, which upper bounds the number of centers selected by \algname\ and the risk obtained by these centers, as a function of the technical parameters.

\textbf{Challenges and solutions in \algname.} We give here an informal explanation of the workings of \algname. The full analysis is provided in \secref{analysis} and the supplementary. Consider the clusters induced by some fixed optimal
$k$-median clustering on $X$. Call these clusters \emph{optimal
clusters}. Optimal clusters that are large (that is, include a sufficiently
large fraction of $X$) are easy to identify from a small random subset of
points. Therefore, approximate centers for these clusters will be identified in
the first phase of \algname\ by the black box algorithm $\cA$, and will be
included in $\ca$. Thus, the risk of $\ca$ on the subset of points that belong to large optimal clusters will be close to optimal. Note that $\ca$ is a clustering with $\ka > k$ centers. This overcomes the fact that the data set might include outliers which are not proportionally represented in each phase. Searching for a $k$-clustering in $P_1$ might thus lead to a clustering that is too biased towards such outliers. By increasing the size of the solution searched in the first phase to $\ka$, this allows the solution to include the outliers as centers, while still choosing also centers that are close to the optimal centers of all large clusters. 

The selection conditions of the third phase further guarantee that at least $M$ points are selected near each center $c_i \in \ca$ which represents a large cluster, thus making sure that a center which is very close to each such $c_i$ is selected. This allows bounding the obtained risk on points in large optimal clusters.

An additional challenge is to ensure that points in small optimal clusters are not too far from the set $\Tout$ of all selected centers. Since the metric space is unbounded, even a single such point can destroy the constant approximation factor. To overcome this, \algname\ selects a point near each center in $\ca$, as well as all the points that are far from $\ca$. The threshold $\eo/(k\tr)$, which defines a point as far, is set so that the number of points selected from large optimal clusters can be bounded. This bound crucially relies on the accuracy of $\eo$ as an estimate for the risk of $\ca$ on large clusters. The required accuracy is obtained by ignoring the points that are furthest from $\ca$ when calculating the risk on $P_2$ (see line \ref{line-P2}). For small clusters, they hold a small number of points by definition, thus the number of such points selected as centers is also bounded.

\subsection{The clustering algorithm: \finalalgname} \label{sec:desc_final_alg}

As described above, \finalalgname\ runs several copies of \algname\ on the same input stream: each point is read from the input stream and then fed to each of the copies of \algname.
Each of these copies selects some centers, and the set of all selected centers is the solution of \finalalgname. The difference between the copies is in the value of the technical input parameters. First, the value of $\alpha$ (the stream fraction for the first and second phases) is progressively doubled, starting with $\alpha_1 \equiv \delta/(4k)$ and ending with $\alpha_{I+1} \equiv \alpha_1 \cdot 2^I$, where $I$ is set so that $\alpha_{I+1} \in (1/12,1/6]$. The value of $\tr$ (which controls the selection threshold) is set to $\phi_{\alpha_{I+1}}$ in all copies, based on the largest value of $\alpha$. In all but the last copy, $\gamma$ (the stream fraction for the third phase) is set to $2\alpha_i$, where $i$ is the copy index. This means that the first and second phase of copy $i$ are each of size $\alpha_i$, while the third phase is of size $2\alpha_i$ (the rest of the stream is ignored by copy $i$). Therefore, copy $i$ observes a $4\alpha_i= 2\alpha_{i+1}$ fraction of the stream. Thus, the first two phases of copy $i+1$ exactly overlap with the fraction of the stream observed by copy $i$. \figref{examplefinalalg} illustrates the overlap of phases in consecutive copies of \algname. In the last copy, indexed by $I+1$, $\gamma$ is set to $1-2\alpha_{I+1}$, thus this copy reads the entire stream. It can be seen that each point, except for the first $2n\alpha_1$ points, participates in the third phase of exactly one copy of \algname. 
As our analysis below shows, this overlap between the phases guarantees that the set of centers selected by the copies of \algname\ obtains a small risk on all the small optimal clusters. The last copy is slightly different: in addition to setting the length of the third phase $\gamma$ to include all the points of the stream, it also sets the technical parameter $\iflarge$ to a number larger than $1$. We show below that in this way, the selected set of centers obtains a small risk on large optimal clusters as well.

\begin{figure}[b]
  \centering
  \includegraphics[width = 0.9\textwidth 
  ]{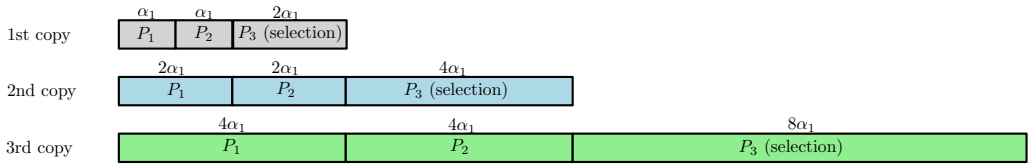}
  \caption{Illustrating the phases in copies $1,2,3$ of \algname\
    within \finalalgname. }
  \label{fig:examplefinalalg}
\end{figure}

The use of several copies of \algname\ with different phase sizes ensures that \finalalgname\ selects only a quasi-linear number of centers, by overcoming the following challenge: On the one hand, the first phase and the second phase must be small, otherwise we might miss a point that needs to be selected, since no points are selected in these two phases. On the other hand, small first and second phases lead to a poor quality of the reference clustering $\ca$. The multiscale approach makes sure that almost all points participate in some selection phase, while at the same time improving the quality of $\ca$ as $\alpha$ grows.
Our analysis below shows that in this way, the number of centers selected by each copy remains similar, even though larger values of $\alpha$ lead to a larger selection phase.

\section{Analysis} \label{sec:analysis}

In this section, we give an overview of the proof of \thmref{final-algorithm}. The supplementary provides the full proof.
Denote the centers in the optimal solution $\OPT$ by $\{c_i^*\}_{i\in k}$, and the clusters induced by $\OPT$ (the \emph{optimal clusters})  by $\{C_i^*\}_{i \in k}$. Formally, $C_i^*:= \{ x \in X \mid i = \argmin_{j \in [k]}\rho(c_j^*,x) \}.$ 
 Optimal clusters with fewer than $\phia$ points are \emph{$\alpha$-small} optimal clusters, and the complement are \emph{$\alpha$-large}.
Denote the indices of $\alpha$-small and $\alpha$-large clusters by $\smallc :=\{ i \in [k] \mid |C_i^*| < \phia \}$ and $\largec := [k] \setminus \smallc $, and the points in these clusters by $\smallp:= \bigcup_{i \in \smallc} C_i^*$ and $\largep:= \bigcup_{i \in \largec} C_i^*$. 
To prove \thmref{final-algorithm}, we provide
 the following guarantee on the centers selected by \algname.
\begin{theorem} \label{thm:gen-first-algorithm}
Let $k,n \in \nats$. Let $\delta \in (0,1)$. Let $(X,\rho)$ be a metric space of size $n$. Let $\Tout$ be the set of centers selected by \algname\ for the input parameters $\delta,k,n, \alpha \in (0,1/6]$, $\gamma \in (\alpha,1-2\alpha]$, $\iflarge \in \nats$, $\tr > 0$. Suppose that the input stream is a random permutation of $X$, and that $\cA$ is a $\beta$-approximation offline $k$-median algorithm. Then, with a probability at least $1-\delta/2$, 
\begin{enumerate}
\item $|\Tout| =O\left(\frac{\gamma}{\alpha}k\log(k/\delta) +k\tau+(k+\log(k/\delta))\iflarge\right)$; \label{gen-numcenters}
\item \label{gen-small} For any $i \in [k]$, if $c_i^* \in P_3$, then
\begin{equation}\label{eq:riskclust}
  R(C_i^*,\Tout) \leq R(C_i^*,\{c_i^*\}) + \frac{|C_i^*|}{k\tr} (36\beta+20) R(X,\OPT);
\end{equation}

\item \label{gen-large} If  $\gamma=1-2\alpha$, $\tr=\phia$  and $\iflarge=\log (8\ka/\delta)$, then
\begin{equation}
R(\largep,\Tout)\leq R(\largep,\OPT) + (468\beta+260) R(X,\OPT).\label{eq:first-algo-large}
\end{equation}
\end{enumerate}
\end{theorem}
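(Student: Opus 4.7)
The plan is to condition on a high-probability regularity event $\event$ produced by the random arrival order, and to derive each of the three conclusions under $\event$. The random permutation lets me view $P_1,P_2,P_3$ as dependent uniform subsamples of $X$, so combining Chernoff/Hoeffding bounds over the $k$ optimal clusters and over bounded risk contributions yields, with probability $\geq 1-\delta/2$, three key properties. First, every $\alpha$-large cluster $C_i^*$ has a number of representatives in $P_1$ proportional to $|C_i^*|$; together with the $\beta$-approximation of $\cA$ on $P_1$ with parameter $\ka>k$ (the extra centers give slack to accommodate outliers), this forces $\ca$ to contain a center $c_{j(i)}$ close to each $\alpha$-large $c_i^*$, and in the aggregate $\sum_{i\in\largec}|C_i^*|\rho(c_i^*,c_{j(i)}) = O(\beta)R(X,\OPT)$. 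Second, the truncated estimate $\eo=\frac{1}{3\alpha}R_{2\alpha(k+1)\phia}(P_2,\ca)$ is, up to universal constants, both an upper and a lower estimate of the risk of $\ca$ on $\largep$, and in particular $\eo = O(\beta) R(X,\OPT)$. Third, for every $c_j\in\ca$ representing some $\alpha$-large $c_i^*$, at least $\iflarge$ points of $P_3$ lie within $\eo/(k\tr)$ of $c_j$. The subtlety in setting up $\event$ is calibrating the truncation parameter $2\alpha(k+1)\phia$ large enough to absorb all $\leq k\phia$ small-cluster outliers yet small enough that no $\alpha$-large cluster (each of size $\geq\phia$) is discarded.

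For Part 1, I bucket each selected point by which of the three conditions in line~\ref{line-nv-thr} triggered it. The ``$\neg\undert_i$'' branch fires at most $\ka$ times, and the ``$n_i\leq\iflarge$'' branch at most $\ka\iflarge$ times. The threshold branch ``$\rho(x,c_i)>\eo/(k\tr)$'' fires at most $k\tr \cdot R(P_3,\ca)/\eo$ times, since each such selection contributes $\geq \eo/(k\tr)$ to $R(P_3,\ca)$. Under $\event$, $R(P_3,\ca)$ concentrates around $\gamma$ times the untruncated risk of $\ca$ on $X$ while $\eo$ is a constant fraction of the truncated counterpart; the gap is controlled by the $O((k+1)\phia)$ truncated outliers, which translates into the $O(\frac{\gamma}{\alpha}k\log(k/\delta))$ term. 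Combining with $\ka=k+O(\log(k/\delta))$ and absorbing constants yields the stated count.

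For Part 2, fix $i$ with $c_i^*\in P_3$ and let $c_{j(i)}$ be its nearest neighbor in $\ca$. When $c_i^*$ is read in Phase 3, either one of the three selection conditions fires and $c_i^*\in\Tout$ (so $R(C_i^*,\Tout)\leq R(C_i^*,\{c_i^*\})$ and \eqref{riskclust} holds since its additive term is non-negative), or none does. In the latter case $\rho(c_i^*, c_{j(i)}) \leq \eo/(k\tr)$, and $\undert_{j(i)}$ being TRUE implies some earlier $y\in\Tout$ satisfies $\rho(y,c_{j(i)})\leq\eo/(k\tr)$, so $\rho(c_i^*,y)\leq 2\eo/(k\tr)$. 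The triangle inequality yields $\rho(x,\Tout)\leq\rho(x,c_i^*)+2\eo/(k\tr)$ for every $x\in C_i^*$; summing and substituting the bound $\eo = O(\beta)R(X,\OPT)$ from $\event$ gives \eqref{riskclust} with the stated constants.

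Part 3 is the main technical obstacle. With $\gamma=1-2\alpha$ the entire stream is processed in Phase 3, and with $\iflarge=\log(8\ka/\delta)$ item (iii) of $\event$ guarantees, for every $c_{j(i)}$ representing an $\alpha$-large $c_i^*$, a selected $y_{j(i)}\in\Tout$ with $\rho(y_{j(i)},c_{j(i)})\leq\eo/(k\tr)$. Applying the triangle inequality $\rho(x,\Tout)\leq\rho(x,c_i^*)+\rho(c_i^*,c_{j(i)})+\eo/(k\tr)$ pointwise and summing over $x\in C_i^*$ and $i\in\largec$ produces three residual sums: $R(\largep,\OPT)$; the aggregated offset $\sum_{i\in\largec}|C_i^*|\rho(c_i^*,c_{j(i)})$, controlled by item (i); and an aggregated threshold tail $\sum_{i\in\largec}|C_i^*|\eo/(k\tr)$. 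The hardest piece is showing that this last tail is also $O(\beta)R(X,\OPT)$ — precisely where the specific form of $\tr=\phia$, the concentration $|C_i^*\cap P_1|\approx\alpha|C_i^*|$, and the two-sided control of $\eo$ under $\event$ must be combined to convert the global bound on $\eo$ into a per-cluster bound that sums correctly rather than scaling with $|\largep|$. Carefully collecting all universal constants from (i)--(iii) then delivers the overhead in \eqref{first-algo-large}.
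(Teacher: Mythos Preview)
Your approach to Part~\ref{gen-small} is essentially the paper's, and your Part~\ref{gen-numcenters} sketch is close in spirit (though note that the inequality $N\le k\tau\, R(P_3,\ca)/\eo$ is not directly usable: outliers in $P_3$ can make $R(P_3,\ca)$ arbitrarily large, so you must \emph{first} separate the $O((k+1)\phia)$ furthest points of $X\setminus P_1$ and count them directly, and only then apply the risk/threshold inequality to the remainder via the lower bound on $\eo$; this is what \lemref{boundingN} does).

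The genuine gap is in Part~\ref{gen-large}. Your decomposition
\[
\rho(x,\Tout)\le \rho(x,c_i^*)+\rho(c_i^*,c_{j(i)})+\frac{\eo}{k\tr}
\]
produces the residual $\sum_{i\in\largec}|C_i^*|\cdot \eo/(k\tr)=|\largep|\cdot \eo/(k\phia)$, and this term \emph{does} scale with $|\largep|$: since $k\phia=O(k\log(k/\delta)/\alpha)$ is independent of $n$ while $|\largep|$ can be $\Theta(n)$, no global upper bound on $\eo$ of the form $\eo=O(\beta)R(X,\OPT)$ can rescue it. You correctly flag this as the crux, but the ingredients you list (the form of $\tr$, concentration of $|C_i^*\cap P_1|$, two-sided control of $\eo$) do not combine to a per-cluster bound that sums to $O(\beta)R(X,\OPT)$; the obstruction is structural, not a matter of constants. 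Relatedly, your event item~(iii) --- that $\iflarge$ points of $P_3$ lie within $\eo/(k\tr)$ of each relevant $c_j$ --- is not something the random order guarantees, and the paper does not claim it.

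The paper's mechanism for Part~\ref{gen-large} is different and does not route through the threshold $\eo/(k\tr)$ for heavy clusters. The role of $\iflarge=\log(8\ka/\delta)$ is a \emph{median trick}: among the first $\iflarge$ points of $C_j\cap P_3$, with high probability one is closer to $c_j$ than half the points of $C_j\cap P_3$ (part~\ref{event-phase3} of $\event$). Hence $r_j:=\rho(c_j,\Tout)$ is at most the median distance in $C_j\cap P_3$, which for ``heavy'' $C_j$ (those with at most a quarter of their $P_3$-mass in $\smallp$) yields $|C_j\cap P_3\cap\largep|\cdot r_j \le 4\,R(C_j\cap P_3\cap\largep,\{c_j\})$. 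Summing gives $R(\largep\cap P_3,\Tout)\le O(1)\,R(\largep,\ca)$, and then \lemref{upper-bound} converts this to $O(\beta)R(X,\OPT)$. The ``light'' clusters are handled separately via the threshold bound, but their total size is at most $4k\phia$, so there the $\eo/(k\phia)$ term sums to $O(\eo)$. Finally, note that with $\gamma=1-2\alpha$ the first $2\alpha n$ points are still in $P_1\cup P_2$, not in $P_3$; the paper handles them by an injection into $P_3$ within each large $C_i^*$ (possible since $|C_i^*\cap P_3|\ge |C_i^*|/2$ under $\event$). Your sketch does not address this.
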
 
The proof of \thmref{final-algorithm},  provided in \appref{proof-main-thms}, uses part \ref{gen-numcenters} to bound the total number of centers, and parts \ref{gen-small} and \ref{gen-large} to bound the risk of small and large optimal clusters, respectively. 
The statement of \thmref{gen-first-algorithm} hints to the reason for the multiscale design of \finalalgname: On the one hand, part \ref{gen-small} gives an approximation upper bound only if the cluster center appears in the third phase. For this to hold with a high probability, the third phase needs to be very large. On the other hand, for a quasi-linear bound on selected centers in part \ref{gen-numcenters}, the ratio between the third phase and the first phases ($\gamma/\alpha$) needs to be small. The multiscale design overcomes this by keeping the ratio constant in each copy of \algname, while ensuring that the union of all third phases is almost the entire stream.

In the rest of the section, we give the proof of \thmref{gen-first-algorithm}: In \secref{events_definitions}, we introduce the concept of \emph{linear bin divisions}, and derive its properties. In
\secref{basic-algo-proof}, we give an overview of the proof of \thmref{gen-first-algorithm}, using the results of \secref{events_definitions}. The full proof is provided in the supplementary.

\subsection{Linear bin divisions} \label{sec:events_definitions}

A main tool in the proof of \thmref{gen-first-algorithm} is partitioning sets of points from the input stream into subsets, such that each of the subsets is probably well represented in a relevant random subset of the stream. \cite{meyerson2004k} defined the concept of a \emph{bin division} of $X$ with respect to a clustering $T$, which is a partition of $X$ into bins of equal size, where points are allocated to bins based on their distance from $T$. Here, we define the concept of a \emph{linear bin division}, in which the bins linearly increase in size.
This gradual increase allows keeping the size of the first bin independent of the stream length, while still proving that with a high probability, the overlap of each of the bins in the division with random subsets of the stream is close to expected. The fixed size of the first bin is crucial for deriving guarantees that are independent of the stream length. In addition, the ratio between adjacent bin sizes is kept bounded, which allows proving a bounded approximation factor. 

\begin{definition}[Linear bin division] \label{geo-div-bins}
  Let $W,T \subseteq X$ be finite sets, and $z \in \nats$. A  \emph{$z$-linear bin division of $W$ with respect to $T$} is a partition $\bins \equiv (\bins(1), \ldots, \bins(L))$ of $W$ (for an integer $L$) such that: 
  \begin{enumerate}
\item If $z \leq |W|$, then $\forall i \in [L], |\bins(i)| \geq  z\cdot(i+1)/2$. Otherwise, the bin division is called \emph{trivial}, and defined as $\bins := \bins(1) := W$. \label{bin-linear} 
\item  $|\bins(1)|\leq \frac{5}{2}z$.   \label{bin-b1} 
\item $\forall i \in [L-1], |\bins(i+1)|/|\bins(i)| \leq 3/2$. \label{bin-ratio}    
\item  $ \forall i \in [L-1]$, and $\forall x \in \bins(i), x' \in \bins(i+1)$, it holds that $\rho(x,T) \geq \rho(x',T)$. \label{bin-order}  
 \end{enumerate}
\end{definition} 
A linear bin division exists for any size of $W$: For $|W| \leq z$, the conditions trivially hold. For $|W| \geq z$, 
the three first properties hold for the following allocation of bin sizes: Let $L$ be the largest integer such that $B := \sum_{i \in [L]} z\cdot (i+1)/2 \leq |W|$. Set the size of $\bins(i)$ to $z\cdot(i+1)/2 + (|W|-B)/L$. Property \ref{bin-linear} clearly holds. Property \ref{bin-b1} holds since $(|W|-B)/L \leq (z(L+2)/2)/L \leq 3z/2$. Property \ref{bin-ratio} holds since $(i+2+a)/(i+1+a) \leq 3/2$ for all $i \geq 1$ and any non-negative $a$.  
To satisfy property \ref{bin-order}, allocate the elements of $W$ into the bins in descending order of their distance from $T$.

We say that a set is \emph{well-represented} in another set if the size of its overlap with the set is similar to expected. This extends naturally to bin divisions. Formally, this is defined as follows.

\begin{definition}[Well-represented]
  Let $W$ be a finite set and $A,B \subseteq W$. We say that $B$ is \emph{well-represented in $A$ for $W$} if 
  $|B \cap A|/|B| \in [r/2, (3/2)r]$, where $r := |A|/|W|$. 
  We say that a linear bin division $\bins$ of $W$ is well represented in $A$ for $W$ if each bin in $\bins$ is well represented in $A$ for $W$. 
\end{definition}

The following lemma shows that if $A$ is selected uniformly at random from $W$, then any fixed set $B$ is well-represented in $A$ with a high probability. Moreover, the same holds for any $z$-linear bin division of $W$ with a sufficiently large $z$. The proof of the lemma is provided in \appref{wellproofs}.

\begin{lemma}\label{lem:combined-well-represented}
  Let $W$ be a finite set. Let $B \subseteq W$ be a set, and let $\bins$ be a $z$-linear bin division of some subset of $W$ with respect to some $T$, for some integer $z$. Let $A \subseteq W$ be a set of size $r|W|$ selected uniformly at random from $W$. Then the following hold:
  \begin{enumerate}
  \item  With a probability at least $1-2e^{-\frac{r|B|}{10}}$, $B$ is well-represented in $A$ for $W$.
    \item If
  $z \geq 10 \log(4/\delta)/r$, then with a probability  $1-\delta$, $\bins$ is well represented in $A$ for $W$. 
  \end{enumerate}
\end{lemma}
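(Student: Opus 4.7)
The plan is to handle the two parts separately, with part 1 feeding into part 2 via a union bound over the bins of the linear division.

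For part 1, I would interpret $|B \cap A|$ as a hypergeometric random variable: we draw $|A| = r|W|$ elements uniformly without replacement from the finite set $W$ containing $|B|$ ``marked'' elements, so $\E[|B \cap A|] = r|B|$. The event that $B$ is well-represented, i.e.\ $|B \cap A|/|B| \in [r/2, 3r/2]$, is exactly the event $||B \cap A| - r|B|| \leq \tfrac12 \cdot r|B|$. I would then invoke the multiplicative Chernoff--Hoeffding inequality for hypergeometric sampling (which holds with at least as good constants as for the binomial case; this follows from Hoeffding's classical reduction of sampling without replacement to sampling with replacement). Using deviation factor $\varepsilon = 1/2$ in the standard bound $\P(|X - \mu| \geq \varepsilon \mu) \leq 2\exp(-\varepsilon^2 \mu / c)$ for a small absolute constant $c$, and being slightly loose to replace $c$ with $10$ in the denominator, yields the claimed bound $2e^{-r|B|/10}$.

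For part 2, if the bin division is trivial then $\bins = \bins(1) = W$ and the claim is immediate since $r = |A|/|W|$ exactly. Otherwise, I would apply part 1 separately to each bin $\bins(i)$ (which is a fixed subset of $W$, independent of $A$) and take a union bound. Property~\ref{bin-linear} of the linear bin division gives $|\bins(i)| \geq z(i+1)/2$, so the failure probability for bin $i$ is at most $2\exp(-rz(i+1)/20)$. Summing over $i = 1,\ldots,L$ gives a geometric-type series whose first term is $2\exp(-rz/10)$ and whose common ratio is $\exp(-rz/20)$. Under the hypothesis $z \geq 10\log(4/\delta)/r$, we have $rz/10 \geq \log(4/\delta)$, so the first term is at most $\delta/2$; moreover $rz/20 \geq \tfrac12\log(4/\delta) \geq \log 2$ (assuming $\delta < 1$), so the common ratio is at most $1/2$ and the full series sums to at most twice the first term, yielding a total failure probability of at most $\delta$.

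The only genuinely delicate step is making the constants line up cleanly in part 1, since the stated bound $2e^{-r|B|/10}$ is specific. I would be careful to invoke a form of the Chernoff bound (e.g.\ the one-sided versions $\P(X \leq (1-\varepsilon)\mu) \leq e^{-\varepsilon^2 \mu/2}$ and $\P(X \geq (1+\varepsilon)\mu) \leq e^{-\varepsilon^2 \mu/3}$, both valid for hypergeometric sampling) and combine the two tails at $\varepsilon = 1/2$, which gives $2e^{-\mu/12}$; rounding the denominator up to $10$ only makes the bound stronger, so the stated inequality holds. Everything else, including the geometric-series bookkeeping in part 2, is routine once the single-set concentration bound is in hand.
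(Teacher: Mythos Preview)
Your overall strategy matches the paper's: a multiplicative concentration inequality for sampling without replacement (the paper phrases it via negative association and cites the Dubhashi--Panconesi bound $\P[|Z-\E Z|\geq\epsilon\E Z]\leq 2e^{-\epsilon^2\E Z/(2+\epsilon)}$; your hypergeometric/Hoeffding route is equivalent), followed in part~2 by a union bound over bins and a geometric-type summation exploiting $|\bins(i)|\geq z(i+1)/2$. The part~2 bookkeeping is essentially identical to the paper's.

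There is, however, a genuine error in your final step of part~1. The one-sided bounds you cite give, at $\varepsilon=1/2$, a combined two-sided tail of $2e^{-\mu/12}$ (the upper-tail term $e^{-\varepsilon^2\mu/3}$ dominates). But $2e^{-\mu/12} > 2e^{-\mu/10}$, so proving the former does \emph{not} imply the latter: your sentence ``rounding the denominator up to $10$ only makes the bound stronger, so the stated inequality holds'' has the implication backwards. The stated bound $2e^{-\mu/10}$ is \emph{tighter} than what you derived, not looser. The fix is to replace the crude upper-tail form $e^{-\varepsilon^2\mu/3}$ with the sharper $e^{-\varepsilon^2\mu/(2+\varepsilon)}$, which at $\varepsilon=1/2$ gives exactly $e^{-\mu/10}$; combining with the lower-tail $e^{-\mu/8}\leq e^{-\mu/10}$ then yields $2e^{-\mu/10}$. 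This is precisely the two-sided bound the paper invokes.

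One minor side remark: in your trivial-case argument for part~2, note that $\bins$ is a division of \emph{some subset} of $W$, not of $W$ itself, so in the trivial case $\bins(1)$ need not equal $W$ and the ratio $|\bins(1)\cap A|/|\bins(1)|$ is not automatically $r$. The paper's proof tacitly works only with the non-trivial case (which is where property~\ref{bin-linear} gives the size lower bound), and in every application in the paper triviality is handled separately.
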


We now state and prove a main property of linear bin divisions, which will allow us to use sub-streams of the input stream to bound the risk of a clustering on $X$.

\begin{lemma}\label{lem:upperboundOPT} 
Let $W\subseteq X$ and let $z \in \nats$. Let $\bins$ be a $z$-linear bin division of  $W$ with respect to some $T$. Let $A \subseteq W$ and $r \in (0,1)$. If  $\forall i \in [L],|\bins(i) \cap A|/|\bins(i)| \leq r$, then 
$
R(A \setminus \bins(1),T) \leq \frac{3}{2} r R(W,T).
$
\end{lemma}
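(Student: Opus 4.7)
The plan is to exploit the ordering property of linear bin divisions (Property~\ref{bin-order} of Definition~\ref{geo-div-bins}), which says that every point in $\bins(i+1)$ is at least as close to $T$ as every point in $\bins(i)$. This will let me upper bound the distance of any point in a bin $\bins(i)$ (for $i \geq 2$) by the \emph{average} distance of points in the previous bin, and then telescope.

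First I would dispose of the trivial bin division case: if $|W| < z$, then $\bins = \bins(1) = W$, so $A \setminus \bins(1) = \emptyset$ and the inequality is vacuous. From now on I assume the non-trivial case with $L$ bins. For any $i \geq 2$ and any $x \in \bins(i)$, the ordering property yields
\begin{equation*}
\rho(x,T) \;\leq\; \min_{y \in \bins(i-1)} \rho(y,T) \;\leq\; \frac{1}{|\bins(i-1)|} \sum_{y \in \bins(i-1)} \rho(y,T) \;=\; \frac{R(\bins(i-1),T)}{|\bins(i-1)|}.
\end{equation*}

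Next I would decompose the target quantity along the bins: since $\{\bins(i)\}_i$ partitions $W$,
\begin{equation*}
R(A \setminus \bins(1), T) \;=\; \sum_{i=2}^{L} \sum_{x \in A \cap \bins(i)} \rho(x,T) \;\leq\; \sum_{i=2}^{L} |A \cap \bins(i)| \cdot \frac{R(\bins(i-1),T)}{|\bins(i-1)|}.
\end{equation*}
Using the hypothesis $|A \cap \bins(i)| \leq r\,|\bins(i)|$ and then the bin-ratio bound $|\bins(i)|/|\bins(i-1)| \leq 3/2$ from Property~\ref{bin-ratio}, this becomes
\begin{equation*}
R(A \setminus \bins(1),T) \;\leq\; r \sum_{i=2}^{L} \frac{|\bins(i)|}{|\bins(i-1)|} R(\bins(i-1),T) \;\leq\; \frac{3r}{2} \sum_{i=1}^{L-1} R(\bins(i),T) \;\leq\; \frac{3r}{2} R(W,T),
\end{equation*}
which is the claim.

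There is no real obstacle here beyond correctly chaining the three properties (ordering, size ratio, and the hypothesis on $A$); the only subtle point is to make sure that the ``index shift'' from $\bins(i)$ to $\bins(i-1)$ on the right-hand side is what allows the sum to cover a subset of the bins of $W$, so the final bound by $R(W,T)$ is valid. In particular, note that no assumption is used on the last bin's contribution, which is exactly why discarding $\bins(1)$ from $A$ (rather than from $W$) on the left-hand side is essential.
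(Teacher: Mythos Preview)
Your proof is correct and follows essentially the same approach as the paper's: both bound each term $R(A\cap\bins(i),T)$ for $i\geq 2$ by $\frac{3}{2}r\,R(\bins(i-1),T)$ via the ordering property~\ref{bin-order} and the ratio bound~\ref{bin-ratio}, then sum. The only cosmetic difference is that the paper introduces $b=\max_{x\in\bins(i+1)}\rho(x,T)$ as an intermediate quantity whereas you go directly through the average distance in the previous bin; the logic is identical.
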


\begin{proof}
Let $L$ be the number of bins in $\bins$.
We first prove the following inequality, which relates the risk of the intersection of a bin with $A$ to the risk of the preceding bin. 

\begin{equation}\label{eq:bini}
\forall i \in [L-1], \, \,  R(A \cap \bins(i+1) ,T ) \leq  \frac{3}{2}r R(\bins(i),T). 
\end{equation}
To prove \eqref{bini}, fix $i \in [L-1]$, and denote $b:= \max_{x \in \bins(i+1)}\rho(x,T).$  By the assumptions, we have \mbox{$|A \cap \bins(i+1)| \leq r|\bins(i+1)|$.} Hence,
$R(A \cap \bins(i+1),T ) \leq r|\bins(i+1)| \cdot b.$
By property~\ref{bin-ratio} of linear bin divisions, $|\bins(i+1)| \leq \frac{3}{2}|\bins(i)|.$ Therefore, $R(A \cap \bins(i+1),T ) \leq \frac{3}{2}r|\bins(i)|\cdot b$. 
In addition, by property \ref{bin-order} of linear bin divisions  and the definition of $b$, 
$
\forall x \in \bins(i), b \leq \rho(x,T).
$
Therefore,  $R(\bins(i),T) \geq |\bins(i)|\cdot b$. 
Combining the two inequalities, we get \eqref{bini}.
It follows that:
\begin{align*}
R(A \setminus \bins(1),T ) &= \sum_{i=2}^L R(A \cap \bins(i), T )  \leq  \frac{3}{2} r \sum_{i=1}^{L-1}  R(\bins(i),T) %\\ 
%                           &= \frac{21}{4} r \sum_{i=1}^{L-1 }R(\bins(i),T)
                              \leq \frac{3}{2} r R(W,T).
\end{align*}
This proves the statement of the lemma. \end{proof}

To prove \thmref{gen-first-algorithm}, we define an event, denoted $\event$, which holds with a high probability. We prove each of the claims in the theorem under the assumption that this event holds. To define the event, we first provide some necessary notation. Consider a run of \algname\ with some fixed set of input parameters, and assume that the input stream is a random permutation of the points in $X$. 
Recall that $\ca = \{c_1,\ldots,c_\ka\}$ is the clustering calculated in the first phase of \algname. Denote the clusters induced by $\ca$ on $X$ by $C_1,\ldots,C_{\ka}$, where $C_i:=\{x \in X \mid i= \argmin_{j \in \ka} \rho(c_j,x)  \}$. We define several sets and bin divisions, which will be used to define the required event.  
Let $\binsa$ be a $(\phia/15)$-linear bin division of $X$ with respect to $\OPT$. Define $\fu:=\far_{k\phia}(X \setminus P_1,\ca)$, and let $\binsb$ be a $(\phia/3)$-linear bin division of $X \setminus (P_1 \cup \fu)$ with respect to $\ca$. Define $\fl:=\far_{4(k+1) \phia}(X \setminus P_1,\ca)$, and let $\binsc$ be a $(\phia/3)$-linear bin division of $X \setminus (P_1 \cup \fl)$ with respect to $\ca$.  
Lastly, define $\fs:=\far_{5(k+1) \phia}(X \setminus P_1,\ca)$.  Note that each of these objects may be trivial if the stream is small. 

The event $\event$ is defined as the following conjunction:
(1)  For each $i \in \largec$, $C_i^*$ is well-represented in $P_1$ and in $P_1 \cup P_2$ for $X$. \manuallabel{event-optimal-clusters}{1}
(2) Each of $\fu$ and $\fl$ is trivial or well-represented in $P_2$ for $X \setminus P_1$; $\fs$ is trivial or well-represented in $P_3$ for $X \setminus P_1$. \manuallabel{event-well-represent-fars}{2}
(3)  $\binsa$ is trivial or well-represented in $P_1$ for $X$; Each of $\binsb$ and $\binsc$ is trivial or well-represented in $P_2$ for $X \setminus P_1$, and \manuallabel{event-well-represent-bins}{3}
(4)   
For each $i \in [\ka]$, one of the first  $\log_2 (8\ka/\delta)$ points observed from $C_i$ in $P_3$ is closer to $c_i$ than at least half of the points in $C_i \cap P_3$. \manuallabel{event-phase3}{4}

The following lemma, proved in \appref{event}, shows that $\event$  holds with a high probability.  

\newcommand{\nmin}{n_{\min}}

\begin{lemma}\label{lem:allevents}
Consider a run of \algname\ with fixed input parameters. Assume that the input stream is a random permutation of the points of $X$. Then $\event$ holds with a probability at least $1-\delta/2$. 
\end{lemma}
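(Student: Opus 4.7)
The approach is a union bound over the four sub-events that constitute $\event$, with a failure budget of $\delta/8$ allocated to each. Sub-events (1), (2), and (3) reduce to applications of \lemref{combined-well-represented}; sub-event (4) is handled by a direct sampling-without-replacement argument.

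For sub-event (1), I apply part 1 of \lemref{combined-well-represented} to each $C_i^*$ with $i \in \largec$, using $W = X$ and $A = P_1$ or $A = P_1 \cup P_2$. Since the definition of $\largec$ forces $|C_i^*| \geq \phia = 150\log(32k/\delta)/\alpha$, each failure probability is at most $2\exp(-\alpha\phia/10) = 2(32k/\delta)^{-15}$, and the union bound over the $\leq 2k$ events is negligible. For sub-event (2), I first condition on $P_1$, which fixes $\ca$ and hence the three far-sets $\fu, \fl, \fs$; marginally, $P_2$ and $P_3$ are then uniformly random subsets of $X \setminus P_1$ of sizes $\alpha n$ and $\gamma n$. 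In the non-trivial case $|\fu|, |\fl|, |\fs|$ each exceed $k\phia$, so part 1 of \lemref{combined-well-represented} again gives doubly-exponentially small failure probabilities, summing well within $\delta/8$.

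For sub-event (3), I invoke part 2 of \lemref{combined-well-represented} with confidence parameter $\delta' := \delta/24$ for each of $\binsa, \binsb, \binsc$ (conditioning on $P_1$ for the latter two, so that these bin divisions are deterministic). The hypothesis requires $z \geq 10\log(4/\delta')/r$; each bin division has $z \geq \phia/15 = 10\log(32k/\delta)/\alpha$, while $r \in \{\alpha, \alpha/(1-\alpha)\}$. Since $\alpha \leq 1/6$, the multiplicative constant $150$ in $\phia$ was chosen precisely to absorb both the $\log(24)$ factor from $\delta' = \delta/24$ and the $(1-\alpha)^{-1}$ factor from the conditional sampling ratios, so the hypothesis is satisfied with slack. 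The three applications together contribute at most $\delta/8$.

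For sub-event (4), I condition on $P_1$ and $P_2$; this fixes $\ca$ and the partition $\{C_i\}_{i \in [\ka]}$, while within $P_3$ the stream order is uniformly random, so the elements of $C_i \cap P_3$ appear in a uniformly random relative permutation for each $i$. If $|C_i \cap P_3| \geq t := \log_2(8\ka/\delta)$, then a standard sampling-without-replacement computation bounds the probability that the first $t$ arrivals all lie in the (at most) $\lfloor |C_i \cap P_3|/2 \rfloor$-element farther half by $(1/2)^t = \delta/(8\ka)$; if $|C_i \cap P_3| < t$ the condition holds trivially, since the closer half is nonempty whenever $C_i \cap P_3 \neq \emptyset$. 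A union bound over $i \in [\ka]$ contributes $\delta/8$, and summing the four parts yields $\delta/2$. I expect the main care to lie in the bookkeeping for sub-event (3): verifying that the constant $150$ in $\phia$ leaves enough slack to simultaneously accommodate the three-way union-bound split and the $(1-\alpha)^{-1}$ inflation of $r$ that appears when $P_2$ and $P_3$ are viewed as random subsets of $X \setminus P_1$.
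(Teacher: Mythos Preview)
Your approach is essentially the paper's: a union bound over the four sub-events, with (1)--(3) reduced to \lemref{combined-well-represented} (conditioning on $P_1$ where needed) and (4) handled by the same sampling-without-replacement argument. The only difference is in the budget split: the paper allocates $\delta/32,\ \delta/32,\ \delta/4,\ \delta/8$ to the four parts rather than a uniform $\delta/8$, and in particular invokes the bin-division lemma for $\binsa$ with confidence $\delta/8$ (and $\delta/16$ for $\binsb,\binsc$). Your uniform $\delta/24$ split inside part~(3) is just barely too tight for $\binsa$ when $k=2$, since you would need $z=\phia/15=10\log(32k/\delta)/\alpha \ge 10\log(96/\delta)/\alpha$, i.e.\ $32k\ge 96$; reallocating from parts~(1)--(2), which you correctly observe cost only $O((\delta/k)^{15})$, fixes this immediately. (Also, the factor $(1-\alpha)^{-1}$ works in your favor, since $r=\alpha/(1-\alpha)>\alpha$ only loosens the hypothesis of \lemref{combined-well-represented}.)
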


\subsection{\thmref{gen-first-algorithm}: proof overview} \label{sec:basic-algo-proof}

In this section,
we give an overview of the proof of \thmref{gen-first-algorithm}. The full proof 
is given in \appref{mainlemmas}. 
In the first phase of \algname, the clustering $\ca$ is calculated using the offline algorithm $\cA$. We first bound the risk of $\ca$ on points in $\alpha$-large optimal clusters. 
  \begin{lemma} \label{lem:upper-bound} 
If  $\event$ holds, then $R(\largep ,\ca) \leq (18\beta+10) R(X,\OPT)$. 
\end{lemma}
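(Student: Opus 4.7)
The plan is to decompose the risk on each large cluster by the triangle inequality, $R(C_i^*, \ca) \leq R(C_i^*, \{c_i^*\}) + |C_i^*|\cdot\rho(c_i^*, \ca)$, and then bound $\rho(c_i^*, \ca)$ by averaging over points in $C_i^* \cap P_1$ that lie outside the outlier bin $\binsa(1)$. This averaging introduces a factor of $O(1/\alpha)$, which is absorbed by an $O(\alpha)$ upper bound on $R(P_1, \ca)$ obtained by comparing $\ca$ to a $\ka$-clustering of $P_1$ that uses its $\ka-k$ extra centers to absorb the outliers of $\binsa(1)$.

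For the $R(P_1, \ca)$ bound, I would set $T' := \{z_i : i \in [k]\} \cup (P_1 \cap \binsa(1))$, where $z_i$ is the closest point in $P_1$ to $c_i^*$. Under $\event$, well-representation of $\binsa$ in $P_1$ combined with $|\binsa(1)| \leq 5\phia/30$ gives $|P_1 \cap \binsa(1)| \leq (3\alpha/2)(\phia/6) = \alpha\phia/4 = 37.5\log(32k/\delta) < \ka-k$, so $|T'| \leq \ka$. For $x \in (C_i^* \cap P_1) \setminus \binsa(1)$ the choice of $z_i$ yields $\rho(c_i^*, z_i) \leq \rho(c_i^*, x)$, and hence $\rho(x, T') \leq \rho(x, z_i) \leq 2\rho(x, c_i^*)$, while points in $P_1 \cap \binsa(1)$ contribute zero. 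Summing and applying \lemref{upperboundOPT} with $W=X$, $A=P_1$, $T=\OPT$, and $r=3\alpha/2$ gives $R(P_1, T') \leq 2\cdot(9\alpha/4)R(X,\OPT) = (9\alpha/2) R(X, \OPT)$, so $\cA$'s approximation guarantee yields $R(P_1, \ca) \leq \beta R(P_1, T') \leq (9\alpha\beta/2)R(X, \OPT)$.

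For each $i \in \largec$ I would pick $y_i \in \argmin_{y \in (C_i^*\cap P_1)\setminus \binsa(1)} [\rho(c_i^*, y) + \rho(y, \ca)]$. The set is nonempty because well-representation of $C_i^*$ in $P_1$ gives $|C_i^* \cap P_1| \geq \alpha|C_i^*|/2 \geq \alpha\phia/2 \geq 2|P_1 \cap \binsa(1)|$, and in fact $|(C_i^* \cap P_1) \setminus \binsa(1)| \geq \alpha|C_i^*|/4$. The averaging inequality together with the triangle inequality $\rho(c_i^*, \ca) \leq \rho(c_i^*, y_i) + \rho(y_i, \ca)$ gives $|C_i^*|\cdot\rho(c_i^*, \ca) \leq (4/\alpha)\bigl[R((C_i^*\cap P_1)\setminus \binsa(1), \{c_i^*\}) + R((C_i^*\cap P_1)\setminus \binsa(1), \ca)\bigr]$. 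Summing over $i \in \largec$ and plugging in the two previous estimates yields $\sum_{i\in\largec} |C_i^*|\rho(c_i^*, \ca) \leq (4/\alpha)\bigl[(9\alpha/4) + (9\alpha\beta/2)\bigr]R(X,\OPT) = (9+18\beta)R(X,\OPT)$, which combined with $R(\largep, \OPT) \leq R(X, \OPT)$ gives the target bound $(18\beta + 10)R(X, \OPT)$.

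The main obstacle is the coordination between the $T'$ construction and the averaging set: one must both place $P_1 \cap \binsa(1)$ into $T'$ as centers (to zero out its contribution to $R(P_1, T')$, since \lemref{upperboundOPT} gives no control on $R(P_1\cap\binsa(1), \OPT)$) and simultaneously average over $(C_i^*\cap P_1) \setminus \binsa(1)$ rather than all of $C_i^* \cap P_1$ (so that only the \emph{controlled} risks $R(\cdot \setminus \binsa(1), \cdot)$ appear in the final estimate). The specific numerical choice $\ka - k = 38\log(32k/\delta)$ and the $(\phia/15)$-linear scaling of $\binsa$ are what make both demands simultaneously feasible: the bound $|P_1 \cap \binsa(1)| \leq \alpha\phia/4$ fits inside the $\ka-k$ budget, and deleting $\binsa(1)$ removes at most half of $C_i^* \cap P_1$ for every $i \in \largec$, keeping the averaging blowup at $4/\alpha$.
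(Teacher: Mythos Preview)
Your proposal is correct and follows essentially the same route as the paper: the paper proves \lemref{upper-bound} via the same triangle-inequality decomposition, the same averaging over $A_i := (C_i^*\cap P_1)\setminus \binsa(1)$ with blowup $4/\alpha$, and the same comparison of $\ca$ to a $\ka$-clustering that absorbs $P_1\cap\binsa(1)$ as extra centers (their \lemref{upper-bound1} and \lemref{claim-phase1}). The only cosmetic difference is that the paper uses $\OPT\cup(P_1\cap\binsa(1))$ as the comparison set (centers in $X$) and invokes the standard factor-$2$ passage to centers in $P_1$, whereas you directly construct $T'\subseteq P_1$ by replacing each $c_i^*$ with its nearest $P_1$-point $z_i$; both yield the identical bound $R(P_1,\ca)\leq 2\beta\,R(P_1\setminus\binsa(1),\OPT)$. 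One small omission: you should dispose of the trivial case $n<\phia/15$ separately (then $\binsa$ is trivial and $\event$ only asserts ``trivial or well-represented''), but in that case every optimal cluster is $\alpha$-small, so $\largep=\emptyset$ and the bound is vacuous.
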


In the second phase of \algname, the goal is to estimate a truncated version of the risk of $\ca$ on $X$. This truncated version ignores the risk on the outliers, which are the $k\phia$ furthest points from $\ca$ in $X$. Setting the estimate to $\eo \equiv \frac{1}{3\alpha}R_{2\alpha (k+1) \phia}(P_2,\ca)$ ignores the furthest $2\alpha (k+1) \phia$ points in $P_2$, which with a high probability include all the $k\phia$ outliers in $X$. 

The following lemma bounds $\eo$ between two versions of a truncated risk of $\ca$. 
\begin{lemma} \label{lem:Phase2:eo-max} 
If $\event$ holds, then
$\frac{1}{9} R_{5  (k+1) \phia}( X \setminus P_1, \ca) \leq \eo \leq  R_{k\phia}(X,\ca)$.
\end{lemma}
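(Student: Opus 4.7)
The plan is to work under event $\event$ throughout, and to use the well-representedness of $\fu,\fl,\binsb,\binsc$ in $P_2$ (for $X\setminus P_1$) as the main tool. Denote $\mathrm{far}_{P_2} := \far_{2\alpha(k+1)\phia}(P_2,\ca)$, so $R_{2\alpha(k+1)\phia}(P_2,\ca) = R(P_2\setminus\mathrm{far}_{P_2},\ca)$. The two inequalities in the lemma will follow from a careful ``sandwiching'': for the upper bound I want to show $\mathrm{far}_{P_2}$ already removes everything outside a convenient set on which a Lemma \ref{lem:upperboundOPT}-style bin argument applies, and for the lower bound I want to show $\mathrm{far}_{P_2}$ is contained in the large far-set $\fl$, so that $P_2\setminus\mathrm{far}_{P_2}$ still contains $P_2\cap Z$ with $Z:=X\setminus(P_1\cup\fl)$.

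For the upper bound, the first step is to verify the set inclusion $\mathrm{far}_{P_2}\supseteq (P_2\cap\fu)\cup(P_2\cap\binsb(1))$. Since bins are ordered by distance from $\ca$, the furthest points of $P_2$ are its intersections with $\fu$ first and $\binsb(1)$ next, so the inclusion reduces to checking that $|P_2\cap\fu|+|P_2\cap\binsb(1)|\le 2\alpha(k+1)\phia$. This is a short computation using the $(3/2)(\alpha/(1-\alpha))$ upper bound from well-representedness together with $|\fu|=k\phia$, $|\binsb(1)|\le(5/6)\phia$ and $\alpha\le 1/6$. Once this is in hand, $P_2\setminus\mathrm{far}_{P_2}\subseteq (P_2\cap X')\setminus\binsb(1)$, where $X':=X\setminus(P_1\cup\fu)$, and Lemma \ref{lem:upperboundOPT} applied with $W=X'$, $T=\ca$, $A=P_2\cap X'$, $r=(3/2)\alpha/(1-\alpha)$ gives $R(P_2\setminus\mathrm{far}_{P_2},\ca)\le \tfrac{9\alpha}{4(1-\alpha)}R(X',\ca)$. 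Finally, because any point in $\far_{k\phia}(X,\ca)$ that lies outside $P_1$ is among the top $k\phia$ in $X\setminus P_1$, we have $\far_{k\phia}(X,\ca)\subseteq P_1\cup\fu$, hence $X'\subseteq X\setminus\far_{k\phia}(X,\ca)$ and $R(X',\ca)\le R_{k\phia}(X,\ca)$. Dividing by $3\alpha$ and using $\alpha\le 1/6$ yields $\eo\le (3/(4(1-\alpha)))R_{k\phia}(X,\ca)\le R_{k\phia}(X,\ca)$.

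For the lower bound, the first step is to show $\mathrm{far}_{P_2}\subseteq\fl$. The well-representedness lower bound $(r/2)|\fl|$ with $|\fl|=4(k+1)\phia$ gives $|P_2\cap\fl|\ge 2\alpha(k+1)\phia/(1-\alpha)\ge 2\alpha(k+1)\phia$, and since $\fl$-points are at least as far from $\ca$ as any point outside $\fl$, the top $2\alpha(k+1)\phia$ points of $P_2$ all lie in $P_2\cap\fl$. Consequently $P_2\setminus\mathrm{far}_{P_2}\supseteq P_2\cap Z$ with $Z=X\setminus(P_1\cup\fl)$. Next I will run the ``mirror'' of the proof of Lemma \ref{lem:upperboundOPT} for $\binsc$ on $Z$, using the \emph{lower} well-representedness $|P_2\cap\binsc(i)|\ge (\alpha/(2(1-\alpha)))|\binsc(i)|$: pairing bin $i{+}1$ with bin $i$, bounding $R(\binsc(i{+}1),\ca)\le |\binsc(i{+}1)|\cdot \min_{x\in\binsc(i)}\rho(x,\ca)$, and using $|\binsc(i{+}1)|/|\binsc(i)|\le 3/2$, summation yields $R(Z\setminus\binsc(1),\ca)\le \tfrac{3(1-\alpha)}{\alpha}R(P_2\cap Z,\ca)$. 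Finally, since $|\binsc(1)|\le (5/6)\phia<(k+1)\phia=|\fs\setminus\fl|$ and $\fs\setminus\fl$ consists of the top $(k+1)\phia$ points of $Z$, we get $\binsc(1)\subseteq\fs\setminus\fl$ and thus $Z\setminus\binsc(1)\supseteq (X\setminus P_1)\setminus\fs$, so $R(Z\setminus\binsc(1),\ca)\ge R_{5(k+1)\phia}(X\setminus P_1,\ca)$. Combining, $R_{5(k+1)\phia}(X\setminus P_1,\ca)\le \tfrac{3(1-\alpha)}{\alpha}\cdot 3\alpha\,\eo=9(1-\alpha)\eo\le 9\eo$.

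The main obstacle, and where I expect to spend most of the care, is the bookkeeping around the cutoff $2\alpha(k+1)\phia$: it has to be simultaneously (i)~large enough to absorb $P_2\cap\fu$ together with $P_2\cap\binsb(1)$ for the upper bound, and (ii)~small enough to fit inside $P_2\cap\fl$ for the lower bound. This is exactly what the choice of $4(k+1)\phia$ for $\fl$ and $k\phia$ for $\fu$ is tuned for, and verifying both with the $\alpha\le 1/6$ constraint is the one place where constants actually matter. I also need to briefly dispatch the edge cases where any of $\fu,\fl,\fs,\binsb,\binsc$ is trivial (which only happens when the relevant set is too small); in each such case the corresponding truncated risk vanishes and the inequality being proved becomes $0\le\eo$ or $\eo\le R_{k\phia}(X,\ca)$, both of which are immediate from nonnegativity.
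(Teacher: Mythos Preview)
Your proposal is correct and follows essentially the same route as the paper. Both proofs split into the same upper- and lower-bound arguments: for the upper bound, show $(\fu\cup\binsb(1))\cap P_2\subseteq \far_{2\alpha(k+1)\phia}(P_2,\ca)$ via the well-representedness counts and then apply \lemref{upperboundOPT} on $\binsb$; for the lower bound, show $\far_{2\alpha(k+1)\phia}(P_2,\ca)\subseteq\fl$ via the lower well-representedness count, run the mirror bin-pairing argument on $\binsc$ (the paper packages this as a separate auxiliary lemma, you inline it), and finish by observing $\fl\cup\binsc(1)\subseteq\far_{5(k+1)\phia}(X\setminus P_1,\ca)$. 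The only cosmetic differences are that the paper simplifies $\alpha/(1-\alpha)\le(4/3)\alpha$ early and works with $r=2\alpha$, whereas you carry the exact ratio and close with $3/(4(1-\alpha))\le1$; and in the last step of the upper bound the paper passes through $R_{k\phia}(X\setminus P_1,\ca)$ while you state the equivalent inclusion $\far_{k\phia}(X,\ca)\subseteq P_1\cup\fu$ directly. Your handling of the trivial cases is also equivalent to the paper's.
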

The proof of this lemma uses the linear bin divisions studied in \secref{events_definitions}. In particular, the upper bound uses \lemref{upperboundOPT}, and the lower bound uses the specific construction of the bin sizes.

From the lemmas above, it can be seen that $\eo$ is also upper bounded by the optimal risk, as follows. $\alpha$-small optimal clusters are smaller than $\phia$, thus their total size is less than $k\phia$. It follows that
\begin{equation}\label{eq:phase1-k-phia-to-la}
R_{k \phia}(X,\ca)  \leq R(X\setminus \smallp,\ca) \equiv R(\largep,\ca).
\end{equation}
Combining \lemref{upper-bound}, \eqref {phase1-k-phia-to-la} and  \lemref{Phase2:eo-max}, we get that $\eo$ is upper bounded by the optimal risk.

In the third phase, the points selected as centers include (among others) all the points with a distance of more than $\eo/(k\tr)$ from $\ca$. This allows bounding the risk of $\Tout$ on points in some of the optimal clusters by a linear expression in $\eo/(k\tr)$. Combined with the upper bound on $\eo$, this allows proving part \ref{gen-small} of \thmref{gen-first-algorithm}.
Part \ref{gen-large} bounds the risk of $\Tout$ on large optimal clusters, for specific settings of the technical parameters of $\algname$. To prove this part, we show that under these parameter settings, the centers selected based on the reference clustering $\ca$ include sufficiently many points around each large optimal cluster. We then show that one of these points is close to the optimal center.

Lastly, part \ref{gen-numcenters} of \thmref{gen-first-algorithm} upper bounds the number of centers selected by \algname. 
The following lemma upper-bounds the number of points that are selected because they are far from $\ca$. 
\begin{lemma}\label{lem:boundingN}
Let $N:=|\{ x \in P_3 \mid \rho(x,\ca) > \eo/(k\tr) \}|$. Under $\event$,
$N  \leq 8 (k+1) \phia \frac{\gamma}{1-\alpha}+9k\tr$.
\end{lemma}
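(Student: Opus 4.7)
\textbf{Proof plan for Lemma \ref{lem:boundingN}.}

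The plan is to split the set counted by $N$ according to whether its points lie in $\fs$, the set of $5(k+1)\phia$ points in $X\setminus P_1$ furthest from $\ca$. Concretely, let
\[
A := \{ x \in P_3 \mid \rho(x,\ca) > \eo/(k\tr) \}, \qquad A_1 := A \cap \fs, \qquad A_2 := A \setminus \fs,
\]
so that $N = |A_1| + |A_2|$. I will bound the two parts separately, handling $A_1$ via the well-representation property of $\event$, and $A_2$ via the truncated-risk upper bound on $\eo$ coming from \lemref{Phase2:eo-max}.

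For $A_1$, the idea is to use item \ref{event-well-represent-fars} of the event: $\fs$ is either trivial or well-represented in $P_3$ for $X\setminus P_1$. In the well-represented case, $|A_1| \le |\fs \cap P_3| \le \tfrac{3}{2}\cdot\tfrac{|P_3|}{|X\setminus P_1|}\cdot|\fs| \le \tfrac{15}{2}(k+1)\phia\cdot\tfrac{\gamma}{1-\alpha}$, using $|P_3|=\gamma n$, $|X\setminus P_1|=(1-\alpha)n$ and $|\fs|\le 5(k+1)\phia$. In the trivial case $\fs = X\setminus P_1$ and $(1-\alpha)n \le 5(k+1)\phia$, so just by stream size $|A_1|\le|P_3|=\gamma n \le 5(k+1)\phia\cdot\tfrac{\gamma}{1-\alpha}$. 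Either way $|A_1| \le 8(k+1)\phia\cdot\tfrac{\gamma}{1-\alpha}$.

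For $A_2$, the observation is that $A_2 \subseteq P_3 \setminus \fs \subseteq (X\setminus P_1)\setminus \fs$, so by definition of $R_{5(k+1)\phia}$,
\[
R(A_2,\ca) \le R\bigl((X\setminus P_1)\setminus \fs,\, \ca\bigr) = R_{5(k+1)\phia}(X\setminus P_1,\ca).
\]
By \lemref{Phase2:eo-max} this is at most $9\eo$. On the other hand every $x \in A_2$ contributes at least $\eo/(k\tr)$ to $R(A_2,\ca)$, so $|A_2|\cdot \eo/(k\tr) \le R(A_2,\ca) \le 9\eo$, yielding $|A_2| \le 9k\tr$ whenever $\eo > 0$; if $\eo=0$ then the same \lemref{Phase2:eo-max} bound forces every point of $(X\setminus P_1)\setminus\fs$ to lie at distance $0$ from $\ca$, so $A_2 = \emptyset$ and the bound holds trivially. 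Summing the two estimates gives $N \le 8(k+1)\phia\cdot\tfrac{\gamma}{1-\alpha} + 9k\tr$, as claimed.

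The only delicate point is the trivial-$\fs$ case, since one cannot directly invoke well-representation there; the step above sidesteps this by noting that triviality itself forces $n$ to be small relative to $(k+1)\phia$. Otherwise the argument is a clean two-line split: well-representation controls the tail $A_1$, and the $\eo$-upper-bound from \lemref{Phase2:eo-max} together with the selection threshold $\eo/(k\tr)$ controls the bulk $A_2$.
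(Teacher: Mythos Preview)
Your proposal is correct and follows essentially the same approach as the paper: both split $N$ into the part inside $\fs$ (bounded via well-representation or the trivial-$\fs$ size argument) and the part outside $\fs$ (bounded via the lower bound on $\eo$ in \lemref{Phase2:eo-max} combined with the threshold $\eo/(k\tr)$). Your explicit treatment of the $\eo=0$ edge case is slightly more careful than the paper's, but otherwise the arguments are the same.
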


The proof  \lemref{boundingN} uses the lower bound on $\psi$ shown in \lemref{Phase2:eo-max}, to conclude that there cannot be too many points that are more than $\psi/(k\tr)$ far from $\ca$, since this would make the truncated risk in the lower bound larger than $\eo$. We note that the resulting upper bound is independent of the stream size, due to the specific construction of the bin sizes in the linear bin division. Using this lemma, we can prove part \ref{gen-numcenters} of \algname. 
The full proof of \thmref{gen-first-algorithm} is given in \appref{mainlemmas}.

\section{Discussion} \label{sec:discussion}

We provided the first constant-approximation algorithm for sequential no-substitution $k$-median clustering that does not require structural assumptions on the input data. Moreover, the number of centers selected by \finalalgname\ is only quasi-linear in $k$.
\finalalgname\ can also be used in a bounded memory setting, by using a bounded-memory black-box streaming algorithm $\cA$ as input. This is because \finalalgname\ requires a memory size of only $\tilde{O}(k\log(1/\delta))$ on top of requirements of the black-box algorithm. 
\finalalgname\ can also be used when the stream length $n$ is not known in advance, using a simple doubling trick. \cite{moshkovitz2021unexpected} showed that an unknown stream length with an approximation factor that does not depend on $n$ require
a logarithmic dependence on $n$ in the number of selected centers. Indeed, a doubling trick for \finalalgname\ would preserve the approximation factor, increasing the number of selected centers only by the unavoidable $\Theta(\log(n))$ factor.

\bibliographystyle{plainnat}

\bibliography{Mybib}

\appendix
\clearpage

\section{Proof of \thmref{final-algorithm}} \label{ap:proof-main-thms}

In this section, we use \thmref{gen-first-algorithm} to prove \thmref{final-algorithm}. First, define the following event, which we denote by $G$: 
\[
  \forall i \in [k], c_i^*\in X\text{ is not observed in the first }n \cdot \delta/(2k)\text{ reads from the input stream.}
  \]
  Observe that $G$ holds with a probability at least $1-\delta/2$: For each $c_i^*$, the probability that it is observed in the prefix is $\delta/(2k)$, and a union bound over $k$ centers gives a probability of at most $\delta/2$ that the event does not hold. We use this observation in the proofs below. First, we prove the guarantees of \algname.

\begin{proof}[Proof of \thmref{final-algorithm}]
  First, we show that we can assume that \thmref{gen-first-algorithm} holds simultaneously for all copies of \algname\ run in \finalalgname. 
  The number of copies of \algname\ is $I+1$. 
    Since \algref{final_alg} calls \algname\ with the confidence parameter $\delta'$, each of the executions of \algname\ satisfies the claims of \thmref{gen-first-algorithm} with a probability at least $1-\delta'/2$. Since $\delta' \equiv \delta/(I+1)$, by a union bound, \thmref{gen-first-algorithm} holds for all executions simultaneously with a probability at least $1-\delta/2$. With a probability at least $1-\delta$, this holds simultaneously with the event $G$ defined above. We henceforth assume that all these events hold. 
    Note that since $\alpha_1 = \delta/(4k)$, we have
    \[
    I+1 \equiv \floor{\log_2(1/(6\alpha_1))}+1 = O(\log(k/\delta)). 
  \]
  Therefore, $\delta' =\Omega(\delta/\log(k/\delta))$ and $\log(k/\delta') = O(\log(k/\delta))$.
  
  To prove the first part of \thmref{final-algorithm}, which
  upper bounds  $|\Tout|$, we apply the first claim of
  \thmref{gen-first-algorithm} (with $\delta'$), which states that
  $|\Tout| =O(\frac{\gamma}{\alpha}k\log(k/\delta')
  +k\tau+(k+\log(k/\delta'))\iflarge)$, to each of the copies of \algname.
  In all the copies of \algname, we have
  \[
    \tau := \phi_\alpham = O(\log(k/ \delta'))=O(\log(k/ \delta)).
  \]
  In addition, in all but the last copy of \algname, we have
  $M := 1$ and $\gamma/\alpha := 2$. 
  Thus, the number of centers selected
  in each of these copies of \algname\ is $O(k\log(k/\delta))$. There are $I = O(\log(k/\delta))$ such copies, hence the total number of
  centers selected in all but the last copy is $O(k\log^2(k/\delta))$.  In the last
  copy, \algname\ is called with
  $M = O(\log(\ka/\delta')) = O(\log(k/\delta))$ and $\gamma =
  1-2\alpham$. Since $\alpham > 1/12$, we have $\gamma/\alpham =
  O(1)$. Therefore, again by the first claim of \thmref{gen-first-algorithm}, the
  number of centers selected by the last copy is
  $O(k\log(k/\delta)+\log^2(k/\delta))$. The overall number of centers selected in \finalalgname\ is thus
  $O(k\log^2(k/\delta))$.

We now prove the second part of \thmref{final-algorithm}, which bounds the approximation factor of $\Tout$ on $X$. Since $G$ holds, none of the centers $c_i^*$, for $i \in [k]$, appears in the first $2\alpha_1 n$ points read from the input stream. Now, each point observed after this prefix appears in the selection phase of some copy of \algname\ in \finalalgname. It follows that for each $i \in [k]$, \eqref{riskclust} in the second part of \thmref{gen-first-algorithm} holds for one of the copies.

We apply \eqref{riskclust} noting that in all copies, we have $\tau := \phi_{\alpham}$. In addition, all copies get the full stream $X$ as input. Moreover, the set of centers $\Tout$ selected by \finalalgname\ is a superset of each of the sets selected in each of the calls to \algname, thus its risk cannot be larger. It follows that for all $i \in [k]$,
\[
  R(C_i^*,\Tout) \leq R(C_i^*,\{c_i^*\}) + \frac{|C_i^*|}{k\phi_{\alpham}} (36\beta+20) R(X,\OPT).
  \] 
In particular, for $i \in \smallcmax$, we have $|C_i^*| \leq \phi_\alpham$.
Therefore,
\begin{align}
R(\smallpmax,\Tout) = \sum_{i \in \smallpmax} R(C_i^*,\Tout)\leq R(\smallpmax,\OPT) + (36\beta+20) R(X,\OPT) \label{eq:final-algo-small}.
\end{align} 

Now, consider the last copy of \algname. In this copy, the conditions of the third claim of \thmref{gen-first-algorithm} hold, and so \eqref{first-algo-large} holds. Combining \eqref{first-algo-large} with \eqref{final-algo-small}, and noting that $X = \smallpmax \cup \largepmax$, the second part of \thmref{final-algorithm} follows. 
\end{proof}

\section{Proof of \lemref{combined-well-represented}}\label{ap:wellproofs}
In this section, we prove \lemref{combined-well-represented}, stated in \secref{events_definitions}, which provides guarantees for the property of being well-represented. 
\begin{proof}[Proof of \lemref{combined-well-represented}]
  For the first claim, we use a multiplicative concentration bound of \citet{dubhashi1996balls}. This bound states that for negatively associated random variables $Z_1,\ldots,Z_n$ taking values in $\{0,1\}$, where $Z := \sum_{i\in[n]} Z_i$, it holds that
  \[
    \P[|Z - \E[Z]| \geq \epsilon \E[Z]] \leq 2e^{-\frac{\epsilon^2 \E[Z]}{2+\epsilon}}.
    \]
 Let $Z_i$ be equal to $1$ if the $i$'th element of $B$ is in $A$.
By \cite{joag1983negative}, the random variables of a uniform sample without replacement are negatively associated, hence the inequality above holds for  $\{Z_i\}$.
In addition, $Z = |B\cap A|$ and $\E[Z] = r|B|$. The first claim follows by setting  $\epsilon := 1/2$.

To prove the second claim, note that the first claim implies that each set $\bins(i)$ in $\bins$ is well-represented in $A$ for $W$ with a probability at least $1-2\exp(-r |\bins(i)|/10 )$. 
By property \ref{bin-linear} of linear bin divisions, and the assumption that $z \geq 10 \log(4/\delta)/r$, we have
\[
    |\bins(i)| \geq z(i+1)/2 \geq 5(i+1) \log(4/\delta)/r.
  \]
   Therefore, $\exp(-r |\bins(i)|/10) \leq  (\delta/4)^{(i+1)/2}.$

   By a union bound over all the bins in $\bins$, the probability that at least one bin in $\bins$ is not well-represented in $A$ for $W$ is upper-bounded by $2\sum_{i=1}^\infty (\delta/4)^{(i+1)/2} \leq \frac{\delta}{2-\sqrt{\delta}}\leq \delta$, which proves the claim.
\end{proof}

\section{Proof of \lemref{allevents}}\label{ap:event}

In this section, we prove \lemref{allevents}, which states that the event $E$ holds with high probability.
\begin{proof}[Proof of \lemref{allevents}] 
  First, we show that part \ref{event-optimal-clusters} of $\event$ holds with a probability at least $1-\delta/32$. This is proved by applying the first part of \lemref{combined-well-represented} to each of the clusters $C_i^*$ for $i \in \largec$, and noting that their size is at least $\phia$. Applying the lemma with the sets $P_1,X$ and with the sets $P_1 \cup P_2, X$ and using a union bound over at most $k$ large optimal clusters, it follows that the probability that part \ref{event-optimal-clusters} does not hold is at most  $4k\exp(-\alpha\phia/10) = 4k\exp(15\log(\delta/(32k)) = 4k\cdot(\delta/(32k))^{15} \leq \delta/32$.

   For part \ref{event-well-represent-fars}, we also show that it holds with a probability at least $1-\delta/32$. we similarly apply the first part of \lemref{combined-well-represented} to each of the required sets using the ratios $|P_2|/|X\setminus P_1| = \alpha/(1-\alpha) \geq \alpha$ and $|P_3|/|X \setminus P_1| = \gamma/(1-\alpha) \geq \alpha$ (since $\gamma \geq \alpha$). Since $\fu,\fl,\fs$, if not trivial, are each of size at least $k\phia$, we get that the probability that any of these sets is not well represented is at most $6\exp(-\alpha \phia/10) \leq 6\exp(15\log(\delta/(32k)) \leq \delta/32$. 
 
     Next, we show that part \ref{event-well-represent-bins} holds with a probability at least $1-\delta/4$. For $\binsa$, if it is not trivial, then the conditions of the second part of \lemref{combined-well-represented} hold with confidence parameter $\delta/8$, since it is a $z$-linear bin division with $z= \phia /15 \geq 10\log(32 /\delta)/\alpha$, and $\alpha = r$, where $r$ is the ratio used in \lemref{combined-well-represented}.  For $\binsb$ and $\binsc$, if they are not trivial, the conditions of the second part of \lemref{combined-well-represented} with confidence parameter $\delta/16$ hold, since they are $z$-linear bin divisions with $z = \phia/3 = 50\log (32k/\delta)/\alpha \geq 10\log(4\cdot 16/\delta)/r$.

  Lastly, we show that part \ref{event-phase3} holds with a probability at least $1-\delta/8$.  Fix $i \in [\ka]$. If
  $|C_i \cap P_3| \leq \log_2 (8\ka/\delta)$, then the statement of
  part \ref{event-phase3} trivially holds. Assume that
  $|C_i \cap P_3| > \log_2 (8\ka/\delta)$. Points from
  $C_i \cap P_3$ are observed in a random order in the stream. The fraction of
  points in $C_1 \cap P_3$ that are closer to $c_i$ than at least half of the points in $C_1 \cap P_3$ is at least half. Therefore, every draw from $C_i \cap P_3$ has a probability of at least half of satisfying this requirement, conditioned on all previous points not satisfying this requirement. The probability that none of the points out of the first $\log_2 (8\ka/\delta)$ satisfies this requirement is thus at most $2^{-\lceil\log_2 (8\ka/\delta)\rceil}\leq \delta/(8\ka)$. A union bound over $i \in [\ka]$ gives the desired bound.

By a union bound over all the parts of the event, it hold with a probability at least $1-\delta/2$. 
\end{proof}

\section{Proof of \thmref{gen-first-algorithm}}\label{ap:mainlemmas}

In this section, we prove \thmref{gen-first-algorithm}.
First, we state two helpful lemmas. Then, we give the proof of the theorem. The proofs of the lemmas below and of the lemmas stated in \secref{basic-algo-proof}, which are all used in the proof of the theorem, are provided in the subsections below. 

The following two lemmas allow us to bound the risk of the set of selected centers $\Tout$ on points in all optimal clusters. The first lemma considers optimal clusters whose center is observed in the third phase. 

\begin{lemma} \label{lem:small_clusters_analysis} 
For any $i \in [k]$, if $c_i^* \in P_3$ then $R(C_i^*,\Tout) \leq R(C_i^*,\{c_i^*\}) + 2|C_i^*|\eo/(k\tr).$
\end{lemma}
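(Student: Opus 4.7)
The plan is a clean case analysis on whether the optimal center $c_i^*$ is itself selected when it arrives in the third phase. In the easy case $c_i^* \in \Tout$, every $x \in C_i^*$ satisfies $\rho(x,\Tout) \leq \rho(x,c_i^*)$, so summing gives $R(C_i^*,\Tout) \leq R(C_i^*,\{c_i^*\})$ and the claimed bound follows immediately, since the extra additive term is nonnegative.

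The interesting case is when $c_i^*$ is read in Phase~3 but not selected. By the selection rule on line~\ref{line-nv-thr}, this requires all three disjuncts to fail simultaneously. Let $c_j \in \ca$ be the center returned by the $\argmin$ in line~11 when $x = c_i^*$. Failure of the first disjunct gives $\rho(c_i^*,c_j) \leq \eo/(k\tr)$. Failure of the second disjunct gives $\undert_j = \texttt{TRUE}$ at the moment $c_i^*$ is processed. The crucial observation is that $\undert_j$ is only ever flipped to \texttt{TRUE} via line~\ref{line-nv-increasingN_and_bi}, and this flip happens precisely when some earlier point is selected within distance $\eo/(k\tr)$ of $c_j$. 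Hence there exists a witness $y \in \Tout$ with $\rho(y,c_j) \leq \eo/(k\tr)$.

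With $y$ in hand, the rest is a routine triangle inequality used twice: for every $x \in C_i^*$,
\[
\rho(x,\Tout) \;\leq\; \rho(x,y) \;\leq\; \rho(x,c_i^*) + \rho(c_i^*,c_j) + \rho(c_j,y) \;\leq\; \rho(x,c_i^*) + \frac{2\eo}{k\tr},
\]
and summing over $x \in C_i^*$ yields the stated inequality.

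The only real subtlety is reading off correctly what the three failing disjuncts imply about the algorithm's state, and in particular extracting the witness $y$ from the flag $\undert_j$ (which is monotone in the execution, so once TRUE stays TRUE). Note that the lemma is purely deterministic: no probabilistic event, not even $\event$, is invoked, and the bound does not require $\eo$ to be an accurate estimate of anything—that concern is deferred to the later combination of this lemma with \lemref{Phase2:eo-max}. The lemma's scope is precisely to translate ``$c_i^*$ was seen in Phase~3'' plus the algorithm's branching logic into a local per-cluster risk bound.
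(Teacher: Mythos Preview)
Your proof is correct and takes essentially the same approach as the paper: the paper factors out the key step---showing that any $x\in P_3$ (in particular $c_i^*$) satisfies $\rho(x,\Tout)\le 2\eo/(k\tr)$ via the same case split and the same witness $y$ extracted from $\undert_j=\texttt{TRUE}$---as a standalone auxiliary lemma (\lemref{all_x_risk_bounded}), then applies the triangle inequality $\rho(x,\Tout)\le \rho(x,c_i^*)+\rho(c_i^*,\Tout)$ and sums. You have simply inlined that auxiliary lemma, routing the triangle inequality through $c_i^*$ and $c_j$ in one chain; the argument is otherwise identical.
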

The second lemma bounds the risk of the set of selected centers on large optimal clusters.
\begin{lemma} \label{lem:large_clusters_analysis} 
If $\event$ holds, $\gamma=1-2\alpha$, $\tr=\phia$ and $\iflarge=\log (8\ka/\delta)$, then 
$$
R(\largep,\Tout)\leq R(\largep,\OPT) + 26 R(\largep,\ca).
$$
\end{lemma}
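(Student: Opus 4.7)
The plan is to identify, for each $\ca$-center $c_j$, a selected witness $y_j \in \Tout$ close to $c_j$, then apply the triangle inequality for each $x \in \largep$ via $y_{j(x)}$, where $c_{j(x)}$ is $x$'s nearest $\ca$-center, and bound the residual sum using the median property from part~\ref{event-phase3} of $\event$ together with the outlier-truncation bounds of \secref{basic-algo-proof}. Under the hypotheses $\iflarge = \log(8\ka/\delta)$ and $\gamma = 1-2\alpha$, the third phase reads $X \setminus (P_1 \cup P_2)$, and the rule $n_j \leq \iflarge$ on line~\ref{line-nv-thr} forces the first $\iflarge$ observed points of each $C_j \cap P_3$ to be selected; part~\ref{event-phase3} of $\event$ then produces some $y_j$ among these with $\rho(y_j,c_j) \leq d_j$, where $d_j$ is the median of $\{\rho(x,c_j) : x \in C_j \cap P_3\}$. (When $|C_j \cap P_3| < \iflarge$, all of $C_j \cap P_3$ is selected and the same conclusion holds with $y_j$ taken as the closest to $c_j$.)

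Triangle inequality yields $\rho(x,\Tout) \leq \rho(x,y_{j(x)}) \leq \rho(x,\ca) + \rho(c_{j(x)},y_{j(x)})$ for each $x \in \largep$; summing over $\largep$ gives
\[
R(\largep,\Tout) \;\leq\; R(\largep,\ca) + \sum_{j \in [\ka]} |\largep \cap C_j|\, \rho(c_j,y_j).
\]
Since $R(\largep,\OPT) \geq 0$, it suffices to bound the second sum by $25\,R(\largep,\ca)$. The median property gives $\rho(c_j,y_j) \leq d_j \leq 2\, R(C_j \cap P_3, c_j)/|C_j \cap P_3|$, reducing the task to bounding $\sum_j \big(|\largep \cap C_j|/|C_j \cap P_3|\big)\, R(C_j \cap P_3, c_j)$. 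I would control the ratio $|\largep \cap C_j|/|C_j \cap P_3|$ by a constant using the well-representation statements of parts~\ref{event-well-represent-fars} and~\ref{event-well-represent-bins} of $\event$ applied to the bin divisions $\binsb$ and $\binsc$, after adapting between $P_2$ and $P_3$ via their symmetric roles as uniform subsets of $X \setminus P_1$, noting $\gamma = 1 - 2\alpha \geq 2/3$. The remaining total $\sum_j R(C_j \cap P_3, c_j) = R(P_3,\ca)$ would be split as $R(P_3 \setminus \fs, \ca) + R(P_3 \cap \fs, \ca)$: the first term is at most $R_{5(k+1)\phia}(X \setminus P_1, \ca) \leq 9\eo \leq 9\,R(\largep,\ca)$ by \lemref{Phase2:eo-max} and \eqref{phase1-k-phia-to-la}, while the outlier term is controlled through the well-representation of $\fs$ in $P_3$ (part~\ref{event-well-represent-fars} of $\event$).

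The main obstacle is the quantitative bound on $\sum_j |\largep \cap C_j|\, d_j$: one must simultaneously prevent outlier points in $\fs$ from inflating the medians $d_j$, which is where the truncated-risk bounds of \lemref{Phase2:eo-max} become crucial, and control the ratio $|\largep \cap C_j|/|C_j \cap P_3|$ despite $\event$ supplying well-representation directly only in $P_2$ rather than $P_3$, so either a symmetry argument or a tailored bin-division analysis is required. Tracking constants through this case analysis yields the final factor $26$ in the statement.
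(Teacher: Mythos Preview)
Your approach has two genuine gaps that prevent it from going through.

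\textbf{First, the ratio $|\largep \cap C_j|/|C_j \cap P_3|$ is not controlled by $\event$.} The well-representation statements in parts~\ref{event-well-represent-fars} and~\ref{event-well-represent-bins} concern the sets $\fu,\fl,\fs$ and the bin divisions $\binsb,\binsc$, all of which partition points by their \emph{distance} from $\ca$, not by which $\ca$-cluster they belong to. A single cluster $C_j$ typically straddles many bins, so well-representation of bins says nothing about $|C_j \cap P_3|$. Moreover, since $\ca$ (and hence $C_j$) is determined by $P_1$, there is no a priori reason $C_j$ should be well-represented in $P_3$; a small $C_j$ could lie entirely in $P_1\cup P_2$, making the ratio infinite. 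Your ``symmetry between $P_2$ and $P_3$'' remark does not help, because $\event$ never asserts well-representation of the $C_j$'s in $P_2$ either.

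\textbf{Second, your bound on $R(P_3,\ca)$ fails at the outlier term.} You write that $R(P_3\cap\fs,\ca)$ is ``controlled through the well-representation of $\fs$ in $P_3$''. But well-representation only bounds the \emph{count} $|P_3\cap\fs|$; it says nothing about the \emph{risk}, and the points in $\fs$ can be arbitrarily far from $\ca$. Thus $R(P_3\cap\fs,\ca)$ need not be bounded by any multiple of $R(\largep,\ca)$. Since your median bound $d_j \le 2R(C_j\cap P_3,c_j)/|C_j\cap P_3|$ routes through the full $R(P_3,\ca)$, this sinks the argument.

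The paper avoids both issues by a different decomposition. It splits the $\ca$-clusters into \emph{light} (those with $|C_j\cap P_3\cap\smallp|\ge \tfrac14|C_j\cap P_3|$) and \emph{heavy}. For heavy clusters, the median set $R_j=\{x\in C_j\cap P_3:\rho(x,c_j)\ge r_j\}$ satisfies $|R_j\cap\largep|\ge |C_j\cap P_3|/4$, so one gets $r_j\le 4\,R(C_j\cap P_3\cap\largep,c_j)/|C_j\cap P_3|$ and hence $R(C_j\cap P_3\cap\largep,\Tout)\le 5\,R(C_j\cap P_3\cap\largep,c_j)$ --- the bound stays entirely inside $\largep$, so outliers in $\smallp$ never enter. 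For light clusters, the total size is at most $4k\phia$, and \lemref{all_x_risk_bounded} bounds each point's distance by $2\eo/(k\phia)$, giving $8\eo\le 8R(\largep,\ca)$. Finally, the extension from $\largep\cap P_3$ to all of $\largep$ uses an injection $\mu:P_1\cup P_2\to P_3$ respecting the \emph{optimal} clusters $C_i^*$ (whose well-representation is exactly part~\ref{event-optimal-clusters} of $\event$); this is where the term $R(\largep,\OPT)$ genuinely enters, not merely as non-negative slack.
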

Based on the lemmas above and the lemmas stated in \secref{basic-algo-proof}, the proof of \thmref{gen-first-algorithm} can be given.
\begin{proof}[Proof of \thmref{gen-first-algorithm}]
By \lemref{allevents}, $\event$ holds with probability $1-\delta/2$. Assume that $\event$ holds.
First, we upper bound $|\Tout|$. Note that \algname\ selects a point only if one of the three conditions in line \ref{line-nv-thr} hold.

By \lemref{boundingN}, the number of points selected by \algname\ due to the first condition is
\[
  N \leq 8 (k+1) \phia \frac{\gamma}{1-\alpha}+9k\tr.
\]
Since  $\phia = O(\log (k/\delta)/\alpha)$ and $\alpha \leq 1/6$, we have \mbox{$N = O((\frac{\gamma}{\alpha}k\log(k/\delta) +k\tau)$}. The rest of the points selected by \algname\ are those satisfying one of the other conditions of line \ref{line-nv-thr}. This allows at most $M$ additional points for each \mbox{$i\in [\ka]$}. Thus, the total number of points selected by \algname\ is \mbox{$O(\frac{\gamma}{\alpha}k\log(k/\delta) +k\tau+\ka\cdot \iflarge)$}. Since by definition, $\ka= O(k + \log(k/\delta))$, part \ref{gen-numcenters} of \thmref{gen-first-algorithm} immediately follows.

Part \ref{gen-small} of \thmref{gen-first-algorithm} follows immediately, by combining \lemref{small_clusters_analysis}, \eqref {phase1-k-phia-to-la}, \lemref{Phase2:eo-max} and \lemref{upper-bound}.  Part \ref{gen-large} is also immediate, by combining \lemref{upper-bound} and \lemref{large_clusters_analysis}.  This completes the proof of the theorem.
\end{proof}

In the next subsections, we prove the lemmas used in the proof of \thmref{gen-first-algorithm}.
\subsection{Bounding the risk of $\ca$ on large clusters} \label{sec:p1-a}

In this section, we prove \lemref{upper-bound}, which bounds
$R(\largep ,\ca)$.

For a given $x\in X$, denote the center closest to it in
$\ca$ by
$c_{\alpha}(x)$. Recall that $\binsa$ is a $(\phia/15)$-linear bin division of $X$ with respect to $\OPT$. 
To prove  \lemref{upper-bound}, we first bound the risk of $\ca$ on large clusters, using an expression that depends on the optimal risk and on the points in the first phase of \algname.
\begin{lemma}\label{lem:upper-bound1}
  If $\event$ holds and $n \geq \phia /15$, then
  \[
    R(\largep ,\ca) \leq R(\largep,\OPT) + \frac{4}{\alpha}(R(P_1 \setminus \binsa(1),\OPT)+R(P_1  ,\ca)).
    \]
\end{lemma}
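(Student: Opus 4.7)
The plan is a two-level triangle inequality argument. For each $i \in \largec$ and each $x \in C_i^*$, write $\rho(x,\ca) \leq \rho(x,c_i^*) + \rho(c_i^*,\ca)$. Summing over $x \in C_i^*$ and over $i \in \largec$ gives
\[
R(\largep,\ca) \leq R(\largep,\OPT) + \sum_{i \in \largec} |C_i^*| \cdot \rho(c_i^*,\ca),
\]
so the task reduces to bounding $|C_i^*|\rho(c_i^*,\ca)$ for each $i \in \largec$. The idea is to bound $\rho(c_i^*,\ca)$ by averaging a second triangle inequality $\rho(c_i^*,\ca) \leq \rho(c_i^*,y) + \rho(y,\ca)$ over $y$ in a well-chosen sample set. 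The natural candidate is $y \in C_i^* \cap P_1 \setminus \binsa(1)$: after averaging, the first term yields $R(C_i^* \cap P_1 \setminus \binsa(1),\{c_i^*\}) = R(C_i^* \cap P_1 \setminus \binsa(1),\OPT)$ (using that $c_i^*$ is $y$'s closest optimal center), and the second yields at most $R(C_i^* \cap P_1,\ca)$; both telescope cleanly through the sum to produce exactly the terms $R(P_1 \setminus \binsa(1),\OPT)$ and $R(P_1,\ca)$ on the right-hand side.

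The main obstacle, and essentially the only quantitative step, is to show that this averaging set is large enough, namely $|C_i^* \cap P_1 \setminus \binsa(1)| \geq \alpha|C_i^*|/4$ for every $i \in \largec$. This is where $\event$ and the specific choices in the linear bin division enter. Part \ref{event-optimal-clusters} of $\event$ gives $|C_i^* \cap P_1| \geq \alpha|C_i^*|/2$. The hypothesis $n \geq \phia/15$ ensures $\binsa$ is non-trivial, so Part \ref{event-well-represent-bins} applies to its first bin: combined with property \ref{bin-b1} of \defref{geo-div-bins} (which I will cite as Definition~\ref{geo-div-bins}), this yields $|\binsa(1) \cap P_1| \leq \tfrac{3}{2}\alpha \cdot \tfrac{5}{2}\cdot\tfrac{\phia}{15} = \alpha\phia/4$. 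Subtracting and using $|C_i^*| \geq \phia$ (since $i \in \largec$) gives the required lower bound.

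Putting the two steps together, each term $|C_i^*|\rho(c_i^*,\ca)$ is bounded by $\tfrac{4}{\alpha}\bigl(R(C_i^* \cap P_1 \setminus \binsa(1),\OPT) + R(C_i^* \cap P_1,\ca)\bigr)$; summing over $i \in \largec$ and enlarging each intersection to all of $P_1 \setminus \binsa(1)$ and $P_1$ respectively yields the claimed inequality. No further estimates are needed beyond the two applications of the triangle inequality and the counting argument above.
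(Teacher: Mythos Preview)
Your proposal is correct and follows essentially the same argument as the paper: the same two-level triangle inequality through $c_i^*$ and then through points of $A_i := C_i^* \cap P_1 \setminus \binsa(1)$, followed by the same counting bound $|A_i| \geq \alpha|C_i^*|/4$ obtained from well-representation of $C_i^*$ and $\binsa(1)$ in $P_1$ together with $|\binsa(1)| \leq \phia/6 \leq |C_i^*|/6$. The only cosmetic difference is that the paper writes $\rho(c_i^*,c_\alpha(c_i^*))$ where you write $\rho(c_i^*,\ca)$, which is the same quantity.
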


\begin{proof}
  For any $x \in X$ and any $i \in [k]$, we have
  \[
    \rho(x,c_{\alpha}(x)) \leq \rho(x,c_{\alpha}(c_i^*)) \leq \rho(x,c_i^*)+\rho(c_i^*,c_{\alpha}(c_i^*)).
    \]
Therefore,  
  \begin{align}
R(\largep ,\ca)&= \sum_{i \in \largec} \sum_{x \in C_i^* } \rho(x,c_{\alpha}(x)) %\leq \sum_{i \in \largec} \sum_{x \in C_i^*} \rho(x,c_{\alpha}(c_i^*))  \\ 
%&\leq \sum_{i \in \largec} \sum_{x \in C_i^* } \bigg( \rho(x,c_i^*)+\rho(c_i^*,c_{\alpha}(c_i^*)) \bigg) \\ 
\leq \sum_{i \in \largec} \sum_{x \in C_i^* } \rho(x,c_i^*) +\sum_{i \in \largec} \sum_{x \in C_i^* }\rho(c_i^*,c_{\alpha}(c_i^*)) \notag  
 \\
 &\leq R(\largep ,\OPT) + \sum_{i \in \largec} |C_i^*| \cdot \rho(c_i^*,c_{\alpha}(c_i^*)). \label{eq:clargeopt}
  \end{align}
 
We now bound $\rho(c_i^*,c_{\alpha}(c_i^*))$. Define $A_i:=C_i^* \cap P_1 \setminus \binsa(1)$ (we show below that $A_i \neq \emptyset$). From the definition of $c_\alpha$, we have 
 \[
\forall x \in A_i, \quad \rho(c_i^*,c_{\alpha}(c_i^*))\leq \rho(c_i^*,c_{\alpha}(x)) \leq \rho(c_i^*,x)+ \rho(x,c_{\alpha}(x)).
\]
Hence,
\[
  \rho(c_i^*,c_{\alpha}(c_i^*)) \leq  \frac{1}{|A_i|}\sum_{x \in A_i} (\rho(c_i^*,x)+ \rho(x,c_{\alpha}(x))) = \frac{1}{|A_i|}(R(A_i,\{c_i^*\})+R(A_i,\ca)).
  \]

Denote $l_i := |C_i^*|$ and $l_i':= |A_i|$. Then, by \eqref{clargeopt},
\begin{align}
 R(\largep ,\ca) \leq R(\largep,\OPT) + \sum_{i \in \largec}\frac{l_i}{l_i'}(R(A_i,\{c_i^*\})+R(A_i,\ca)).\label{eq:lratio}
\end{align}
We now upper bound $l_i/l_i'$. Since $n \geq \phia/15$, we have that $\binsa(1)$ is non-trivial. Since $\event$ holds, $\binsa(1)$ and $\{C_i^*\}_{i \in \largec}$ are well-represented in $P_1$ for $X$. In addition, $|P_1| = \alpha |X|$. 
 Hence,
\begin{align}\label{eq:liprime}
  l_i' \equiv |C_i^* \cap P_1 \setminus \binsa(1)| \geq |C_i^* \cap P_1| -|\binsa(1) \cap P_1| \geq \alpha|C_i^*|/2-(3/2)\alpha |\binsa(1)|.
\end{align}
By the definition of $\alpha$-large optimal clusters, for all $i \in \largec, \phia \leq |C_*^i| \equiv l_i$. Since $\binsa$ is a $(\phia/15)$-linear bin division of $X$, by property \ref{bin-b1} of linear bin divisions, we have $|\binsa(1)|\leq \frac{5}{2}\cdot \frac{\phia}{15}=\phia/6 \leq l_i/6$. Hence, by \eqref{liprime},
$l_i'\geq  \alpha l_i/2 -(3/2)\alpha \cdot (l_i/6) \geq \alpha l_i/4.$
Therefore, $l_i/l_i' \leq 4/\alpha$. Note that this also implies that $A_i$ is non-empty. Thus, from \eqref{lratio}, we have 
 \begin{align*}
   R(\largep ,\ca) %\leq R(\largep ,\OPT) + \frac{4}{\alpha} \sum_{i \in \largec}(R(P_1 \cap C_i^* \setminus \binsa(1),\{c_i^*\})+R(P_1 \cap C_i^* \setminus \binsa(1),\ca))\\
   \leq R(\largep, \OPT) + \frac{4}{\alpha}\Big(&R(P_1 \cap \largep \setminus \binsa(1),\OPT)\\
   &+R(P_1 \cap \largep \setminus \binsa(1),\ca)\Big).
\end{align*}
Since $P_1 \cap \largep \setminus \binsa(1) \subseteq P_1 \cap  \binsa(1)$, this leads to the statement of the lemma. 
\end{proof}
In the next lemma, we bound $R(P_1,\ca)$, which appears in the RHS of the inequality given in \lemref{upper-bound1}. 
In the proof of this lemma, the importance of selecting $\ka$ centers for $\ca$ instead of only $k$ becomes apparent, as it allows deriving an upper bound on the risk of $\ca$ that disregards the outliers in $\bins(1)$.

\begin{lemma} \label{lem:claim-phase1}
If $\event$ holds and $n \geq \phia/15$, then $R(P_1,\ca) \leq 2\beta R(P_1 \setminus \binsa(1),\OPT)$.
\end{lemma}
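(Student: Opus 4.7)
The plan is to exploit the $\beta$-approximation guarantee of $\cA$: since $\ca = \cA(P_1, \ka)$, for any $\ka$-clustering $T' \subseteq P_1$ we have $R(P_1, \ca) \leq \beta R(P_1, T')$. It therefore suffices to exhibit some $T' \subseteq P_1$ with $|T'| \leq \ka$ and $R(P_1, T') \leq 2 R(P_1 \setminus \binsa(1), \OPT)$. Chaining these two inequalities gives the lemma.

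The construction uses the ``extra'' $\ka - k = 38\log(32k/\delta)$ centers to absorb the potential outliers in $P_1 \cap \binsa(1)$. Specifically, I would define $T'$ to contain every point of $P_1 \cap \binsa(1)$ (which contributes zero risk on these points), together with, for each $i \in [k]$ such that $P_1 \cap C_i^* \setminus \binsa(1) \neq \emptyset$, the representative $y_i := \argmin_{x \in P_1 \cap C_i^* \setminus \binsa(1)} \rho(x, c_i^*)$. For each $x \in P_1 \setminus \binsa(1)$, letting $i$ be the index such that $x \in C_i^*$, $y_i$ is defined, and the triangle inequality combined with $\rho(y_i, c_i^*) \leq \rho(x, c_i^*)$ yields $\rho(x, y_i) \leq 2\rho(x, c_i^*) = 2\rho(x, \OPT)$. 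Summing over $x \in P_1 \setminus \binsa(1)$ and adding the zero contribution from $P_1 \cap \binsa(1)$ gives the desired bound on $R(P_1, T')$.

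The main obstacle --- and the likely reason for the exact constant hidden in $\ka := k + 38\log(32k/\delta)$ --- is verifying $|T'| \leq \ka$, i.e., that $|P_1 \cap \binsa(1)| \leq \ka - k$. By property \ref{bin-b1} of linear bin divisions, $|\binsa(1)| \leq \frac{5}{2}\cdot \phia/15 = \phia/6$. Under $\event$, and using the assumption $n \geq \phia/15$ to ensure $\binsa$ is non-trivial, $\binsa(1)$ is well-represented in $P_1$, so $|P_1 \cap \binsa(1)| \leq \frac{3}{2}\alpha \cdot |\binsa(1)| \leq \frac{\alpha\phia}{4} = 37.5 \log(32k/\delta) < \ka - k$, exactly as needed. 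Beyond this constant-matching step, the argument is just the standard ``snap to the nearest input point in each cluster'' $2$-approximation, applied to the reduced point set $P_1 \setminus \binsa(1)$.
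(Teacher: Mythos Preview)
Your proof is correct and follows essentially the same approach as the paper. The paper's version first invokes the well-known fact that restricting centers to $P_1$ costs at most a factor $2$ compared to centers from $X$, and then compares to the clustering $\OPT \cup (P_1 \cap \binsa(1))$; you instead unfold that factor-$2$ argument into an explicit construction of $T' \subseteq P_1$ consisting of the outliers $P_1 \cap \binsa(1)$ together with the snapped representatives $y_i$. The size bound $|P_1 \cap \binsa(1)| \leq \frac{3}{2}\alpha \cdot \frac{\phia}{6} = 37.5\log(32k/\delta) < \ka - k$ is identical in both proofs and is indeed the source of the constant $38$ in the definition of $\ka$.
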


\begin{proof} 
  Denote the optimal $\ka$-median solution for $P_1$ with centers from the entire $X$ by
  \[
    \OPT^\ka_{P_1} := \argmin_{T  \subseteq X, |T| \leq \ka}R(P_1,T).
  \]
  Denote the optimal $\ka$-median solution for $P_1$ with centers from $P_1$ by
  \[
    \widetilde{\OPT}^\ka_{P_1} := \argmin_{T  \subseteq P_1, |T| \leq \ka}R(P_1,T).
  \]

  It is well known \cite[see, e.g.,][]{guha2000clustering} that for any set $P_1$ in any metric space $(X, \rho)$, the risk ratio between the solution that uses only centers from $P_1$ and the solution that uses the entire metric space is bounded by $2$: $R(P_1,\widetilde{\OPT}^\ka_{P_1})\leq 2R(P_1,\OPT^{\ka}_{P_1}) $.
  Now, the black-box algorithm $\cA$ which outputs $\ca$ in the first phase of \algname\ is a $\beta$-approximation offline algorithm. Therefore,  we have
  \begin{equation}\label{eq:beta}
    R(P_1,\ca) \leq \beta  R(P_1,\widetilde{\OPT}^{\ka}_{P_1}) \leq 2\beta R(P_1,\OPT^{\ka}_{P_1}).
  \end{equation}

We bound $R(P_1,\OPT^{\ka}_{P_1})$ by the risk of the clustering defined by $\OPT$, with additional centers that are the furthest from $\OPT$ in $P_1$.
Recall that $\binsa$ is a $(\phia/15)$-linear bin division of $X$ with respect to $\OPT$. 
Define $A := \binsa(1)  \cap P_1$. By property \ref{bin-order} of linear bin divisions, the points in $A$ are the furthest from $\OPT$ in $P_1$. We now bound the size of $A$. 
Since $\event$ holds and $n \geq \phia/15$, $\binsa(1)$ is non-trivial and well represented in $P_1$ for $X$. Therefore, $|A| \leq \frac{3}{2}\alpha|\binsa(1)|$.  In addition, by property \ref{bin-b1} of linear bin divisions, $|\binsa(1)|\leq \frac{5}{2}\cdot\frac{\phia}{15}\leq  \phia/6$. Therefore,
$|A| \leq \alpha \phia/4 \leq 38\log (32k/\delta).$ 

It follows that $|\OPT \cup A| \leq k+38\log (32k/\delta) \equiv \ka$. Thus, by the definition of $\OPT^{\ka}_{P_1}$, 
\begin{align*}
  R(P_1,\OPT^{\ka}_{P_1}) &\leq R(P_1,\OPT \cup A) =R(P_1 \setminus A,\OPT \cup A) \\
                          &= R(P_1 \setminus \binsa(1),\OPT \cup A) \leq  R(P_1 \setminus \binsa(1),\OPT).
\end{align*}
Combining the inequality above with \eqref{beta}, we get the statement of the lemma.
\end{proof}

\lemref{upper-bound1} and \lemref{claim-phase1} can be combined to get
\begin{equation}\label{eq:interbound}
 R(\largep ,\ca) \leq R(\largep,\OPT) + \frac{4}{\alpha}(2\beta+1)R(P_1 \setminus \binsa(1),\OPT).
\end{equation}

The proof of \lemref{upper-bound} is now almost immediate.
\begin{proof}[Proof of \lemref{upper-bound}]
  We distinguish between two cases based on the stream length $n$.
  First, consider the case $n<\phia / 15$. In this case, all optimal clusters are smaller than $\phia/15$, thus they are all $\alpha$-small optimal clusters. Hence, $\largep =\emptyset$, and so $R(\largep,\ca)=0 $, which trivially satisfies the required upper bound.

  Now, suppose that $n \geq \phia/15$. Thus, $\binsa$ is non-trivial. Since $\event$ holds, it follows that $\binsa$ is well-represented in $P_1$ for $X$. Hence, $\forall i \in [L], |P_1 \cap \bins(i)| \leq \frac{3}{2}\alpha |\bins(i)|$. We now apply \lemref{upperboundOPT} with $W:=X, T := \OPT$, $A:=P_1$, $\bins:=\binsa$, and $r:=\frac{3}{2}\alpha$, we have
  \[
    R(P_1 \setminus \binsa(1),\OPT) \leq \frac{9}{4}\alpha R(X,\OPT).
    \]
Combining this with \eqref{interbound}, we get
\begin{align*}
R(\largep ,\ca) %&\leq R(\largep,\OPT) + \frac{4}{\alpha}(\beta+1)R(P_1 \setminus \binsa(1),\OPT)\\
&\leq   R(X,\OPT) + 9(2\beta+1)R(X,\OPT) \leq (18\beta+10)R(X,\OPT), %10(2\beta+1) R(X,\OPT),
\end{align*}
As claimed.
\end{proof}

\subsection{Upper-bounding the risk estimate $\eo$}\label{sec:p2-a}
In \lemref{upper-bound}, we showed that the risk of $\ca$ on large optimal clusters is bounded by a constant factor over the optimal risk.
In the second phase, \algname\ calculates an estimate of this risk, defined as $\eo := \frac{1}{3\alpha}R_{2\alpha (k+1) \phia}(P_2,\ca)$.
In this section, we prove the upper bound in \lemref{Phase2:eo-max}, which shows that the removal of $2\alpha (k+1) \phia$ outliers from the risk estimate indeed guarantees that $\eo$ is upper bounded by the risk of $\ca$ on large clusters. The proof of the lower bound in \lemref{Phase2:eo-max} is provided in \secref{bounding-number-of-far-points}.

 \begin{proof}[Proof of \lemref{Phase2:eo-max} (Upper bound)]
  We distinguish between two cases, based on the size of the stream.
  If $n < 2(k+1)\phia$, then 
  $|P_2|< 2\alpha(k+1)\phia$. 
  In this case, $R_{2\alpha (k+1) \phia}(P_2,\ca) = 0$, thus $\eo = 0$ and the upper bound trivially holds.
  
  Now, consider the case $n \geq 2(k+1)\phia$. In this case, $|X \setminus P_1| \geq (1-\alpha)n \geq (k+1)\phia$. Recall that $\fu \equiv \far_{k\phia}(X \setminus P_1,\ca)$, and  $\binsb$ is a $(\phia/3)$-linear bin division of $(X \setminus P_1) \setminus \fu$ with respect to $\ca$. $\fu$ is non-trivial, since $|X \setminus P_1| \geq k\phia$. Similarly, $\binsb$ is non-trivial, since $|(X \setminus P_1) \setminus \fu|\geq (k+1)\phia - k\phia \geq \phia/3$.
  
  As shown in 
  \secref{basic-algo-proof}, $R_{k\phia}(X,\ca) \leq R(\largep,\ca)$.  Hence, to prove the lemma, it suffices to prove that 
$\eo \leq R_{k\phia}(X,\ca).$ In the rest of the proof we derive this inequality.

We first show that $(\fu \cup \binsb(1))\cap P_2 \subseteq \far_{2\alpha (k+1) \phia}(P_2, \ca)$. This would imply that the points in $\fu \cup \binsb(1)$ do not contribute to the risk $R_{2\alpha (k+1) \phia}(P_2,\ca)$, which is used in the calculation of $\eo$. 
Since $\event$ holds, $\fu $ and $ \binsb(1)$ (which are both non-trivial) are well-represented in $P_2$ for $X \setminus P_1$. We also have $|P_2|/|X \setminus P_1| = \alpha/(1-\alpha) \leq \frac{4}{3}\alpha$, where the last inequality follows since $\alpha \leq 1/6$. Therefore, $|\fu \cap P_2| \leq \frac{3}{2}\cdot \frac{4}{3}\alpha|\fu| = 2\alpha k \phia$, and similarly, $|\binsb(1) \cap P_2| \leq 2\alpha|\binsb(1)| \leq 2\alpha \phia$, where the last inequality follows from property \ref{bin-b1} of linear bin divisions, since $(5/2)\cdot(\phia/3)\leq \phia$. It follows that 
\[
  |(\fu \cup \binsb(1)) \cap P_2| \leq |\fu \cap P_2| + |\binsb(1) \cap P_2|\leq2\alpha (k+1)\phia. 
\] 
By the definitions of $\fu$ and $\binsb$, and by property \ref{bin-order} of linear bin division, we have that the points in $\fu \cup \binsb(1)$ are the furthest points from $\ca$ in $X \setminus P_1$. Thus, $(\fu \cup \binsb(1))\cap P_2$ are the furthest points from $\ca$ in $P_2$.  
Therefore, $(\fu \cup \binsb(1)) \cap P_2 \subseteq \far_{2\alpha (k+1) \phia}(P_2, \ca)$, as we wished to show.
This implies that 
\begin{equation} \label{eq:p2A11}
R_{2\alpha (k+1) \phia}(P_2,\ca) \leq R(P_2 \setminus (\fu \cup \binsb(1)),\ca).
\end{equation} 

Now, since $\binsb$ is well-represented in $P_2$ for $X \setminus P_1$, denoting the number of bins in $\binsb$ by $L$, we have that for all $i\in [L], |\binsb(i) \cap P_2|/|\binsb(i)| \leq 2\alpha$. Since $\binsb$ is a partition of $(X \setminus P_1) \setminus \fu$, we have that $P_2 \cap \binsb(i) = (P_2 \setminus \fu) \cap \binsb(i)$. 
Therefore, we can apply \lemref{upperboundOPT} with $W:=(X\setminus P_1) \setminus \fu$, $\bins:=\binsb$, $T:=\ca$, $A:=P_2 \setminus \fu$, and $r:=2\alpha$. It follows that 
 \begin{equation} \label{eq:p2A13}
 R((P_2 \setminus \fu) \setminus \binsb(1),\ca) \leq 3\alpha R((X\setminus P_1) \setminus \fu,\ca) = 3\alpha R_{k\phia}(X \setminus P_1, \ca).
\end{equation}
The last equality follows from the definition of $\fu$. By combining \eqref{p2A11} and \eqref{p2A13}, we get that $R_{2\alpha (k+1) \phia}(P_2,\ca)  \leq 3 \alpha R_{k\phia}(X \setminus P_1,\ca).$
Plugging in the definition of $\eo$, this implies that $\eo \leq R_{k\phia}(X \setminus P_1,\ca) \leq R_{k\phia}(X,\ca)$, as required. 
 \end{proof}

 \subsection{Bounding the number of selected far points}\label{sec:bounding-number-of-far-points}

In this section, we prove \lemref{boundingN}, which bounds the number of points that are selected as centers and their distance from $\ca$ is more than $\eo/(k\tr)$. We first prove  an auxiliary lemma, and provide the proof for the lower bound in \lemref{Phase2:eo-max}, which is used in the proof of \lemref{boundingN}. The auxiliary lemma provides a relationship between the risks of consecutive bins.

 \begin{lemma}\label{lem:claim1} 
Let $L$ be the number of bins in $\binsc$. If $\event$ holds, then
   \begin{align*}
  \forall i \in [L-1], \quad R(\binsc(i) \cap P_2,\ca) &\geq \frac{\alpha}{3}R(\binsc(i+1),\ca).
   \end{align*}
 \end{lemma}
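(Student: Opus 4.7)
The plan is a direct computation that combines properties \ref{bin-order} and \ref{bin-ratio} of linear bin divisions with the well-representedness of $\binsc$ in $P_2$ guaranteed by $\event$. The key observation is that, by the ordering property of linear bin divisions, every point in $\binsc(i)$ is at least as far from $\ca$ as any point in $\binsc(i+1)$, so a single scalar $b := \max_{x' \in \binsc(i+1)} \rho(x',\ca)$ serves both as an upper bound on the per-point risk in $\binsc(i+1)$ and as a lower bound on the per-point risk in $\binsc(i)$.

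First I would dispose of the trivial case: if $\binsc$ is trivial then $L=1$ and the quantification $i \in [L-1]$ is vacuous. Assume $\binsc$ is non-trivial, fix $i \in [L-1]$, and set $b := \max_{x' \in \binsc(i+1)} \rho(x',\ca)$. Property \ref{bin-order} gives $\rho(x,\ca) \geq b$ for every $x \in \binsc(i)$, so
\[
R(\binsc(i) \cap P_2,\ca) \;\geq\; b \cdot |\binsc(i) \cap P_2|,
\qquad
R(\binsc(i+1),\ca) \;\leq\; b \cdot |\binsc(i+1)|.
\]

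Next I would lower-bound $|\binsc(i) \cap P_2|$ in terms of $|\binsc(i+1)|$. Under $\event$, $\binsc$ is well-represented in $P_2$ for $X\setminus P_1$, so with $r := |P_2|/|X \setminus P_1| = \alpha/(1-\alpha) \geq \alpha$, we have $|\binsc(i) \cap P_2| \geq \tfrac{r}{2} |\binsc(i)| \geq \tfrac{\alpha}{2} |\binsc(i)|$. Combining this with property \ref{bin-ratio}, which gives $|\binsc(i+1)| \leq \tfrac{3}{2} |\binsc(i)|$, i.e.\ $|\binsc(i)| \geq \tfrac{2}{3} |\binsc(i+1)|$, yields
\[
|\binsc(i) \cap P_2| \;\geq\; \tfrac{\alpha}{2}\cdot \tfrac{2}{3}\,|\binsc(i+1)| \;=\; \tfrac{\alpha}{3}\,|\binsc(i+1)|.
\]

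Chaining the three inequalities gives
\[
R(\binsc(i)\cap P_2,\ca) \;\geq\; b\cdot \tfrac{\alpha}{3}\,|\binsc(i+1)| \;\geq\; \tfrac{\alpha}{3}\,R(\binsc(i+1),\ca),
\]
which is exactly the desired bound. There is no real obstacle: the argument is essentially bookkeeping, and the only subtlety is making sure that the well-representedness ratio one invokes from $\event$ is the one for $P_2$ inside $X \setminus P_1$ (yielding $r = \alpha/(1-\alpha) \geq \alpha$) rather than $P_2$ inside $X$.
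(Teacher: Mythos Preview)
Your proof is correct and follows essentially the same route as the paper: define $b=\max_{x'\in\binsc(i+1)}\rho(x',\ca)$, use property~\ref{bin-order} to get the two risk bounds in terms of $b$, then combine well-representedness of $\binsc$ in $P_2$ for $X\setminus P_1$ (giving $|\binsc(i)\cap P_2|\ge(\alpha/2)|\binsc(i)|$) with property~\ref{bin-ratio} to obtain the factor $\alpha/3$. Your explicit handling of the trivial case and the remark about the correct ratio $r=\alpha/(1-\alpha)$ are fine additions.
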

 \begin{proof}
Fix $i \in [L-1]$, and define $b := \max_{x \in \binsc(i+1)}\rho(x,\ca)$. By property \ref{bin-order} of linear bin divisions, $\rho(x,\ca) \geq b$ for all $x \in \binsc(i)$. 
Therefore,
\begin{align*}
  &R(\binsc(i) \cap P_2,\ca)\geq |\binsc(i) \cap P_2| \cdot b,  \quad\text{ and }\\
  &R(\binsc(i+1),\ca)\leq |\binsc(i+1)| \cdot b.
\end{align*}
It follows that
\[
  \frac{R(\binsc(i) \cap P_2,\ca)}{R(\binsc(i+1) ,\ca)} \geq \frac{|\binsc(i) \cap P_2|}{|\binsc(i+1)|}.
  \]
  Thus, to prove the claim, it suffices to lower-bound the RHS by $\alpha/3$.
  Since $\event$ holds, $\binsc$ is well-represented in $P_2$ for $X \setminus P_1$. Since $|P_2|/|X \setminus P_1| = \alpha/(1-\alpha) \geq \alpha$, it follows that  \mbox{$|\binsc(i) \cap P_2| \geq (\alpha/2)|\binsc(i)|$.} 
  By property \ref{bin-ratio} of linear bin divisions, $|\binsc(i)| \geq \frac{2}{3}|\binsc(i+1)|.$ Therefore, $|\binsc(i) \cap P_2| \geq (\alpha/3)|\binsc(i+1)|$, which proves the claim.
\end{proof}
The next step is to prove the lower bound in \lemref{Phase2:eo-max}, which bounds from below the risk estimate $\eo$ calculated in \algname. Recall that $\eo := \frac{1}{3\alpha}R_{2\alpha (k+1) \phia}(P_2,\ca)$.

 \begin{proof}[Proof of \lemref{Phase2:eo-max} (Lower bound)]
First, note that if $|X \setminus P_1|\leq 5  (k+1) \phia$, then $R_{5  (k+1) \phia}( X \setminus P_1, \ca)=0$, and so the lemma trivially holds. We henceforth assume that $|X \setminus P_1| > 5  (k+1) \phia$.   Recall that $\fl \equiv \far_{4(k+1)\phia}(X \setminus P_1,\ca)$, and  $\binsc$ is a $(\phia/3)$-linear bin division of $(X \setminus P_1) \setminus \fl$ with respect to $\ca$. Under the assumption, $|X \setminus P_1| > 4(k+1)\phia$, thus $\fl$ is non-trivial. Similarly, $\binsc$ is non-trivial, since $|(X \setminus P_1) \setminus \fl|\geq 5(k+1)\phia - 4(k+1)\phia > \phia/3$.

To prove the lemma, we show  the following inequality:
\begin{align} \label{eq:p2l0}
R_{2\alpha (k+1) \phia}(P_2,\ca)\geq \frac{\alpha}{3} R_{5  (k+1) \phia}( X \setminus P_1, \ca).
\end{align}
This would immediately imply that $\eo \equiv \frac{1}{3\alpha}R_{2\alpha (k+1) \phia}(P_2,\ca) \geq \frac{1}{9} R_{5  (k+1) \phia}( X \setminus P_1, \ca)$, as required.
Denote by $L$ the number of bins in $\binsc$. Define
\newcommand{\tX}{\widetilde{X}}
\[
  \tX:=\bigcup_{i=2}^L \binsc(i) \equiv (X \setminus P_1) \setminus (\fl \cup \binsc(1)).
\]
See \figref{lower_bound_fig} for an illustration of the sets used in this proof. To prove \eqref{p2l0}, we show the following string of inequalities:
\begin{equation}\label{eq:string}
  R_{2\alpha (k+1) \phia}(P_2,\ca)\geq R(P_2 \setminus \fl,\ca) \geq \frac{\alpha}{3} R(\tX,\ca) \geq \frac{\alpha}{3} R_{5  (k+1) \phia}( X \setminus P_1, \ca).
\end{equation}

\begin{figure}[H]
  \centering
  \includegraphics[width = 0.8\textwidth]{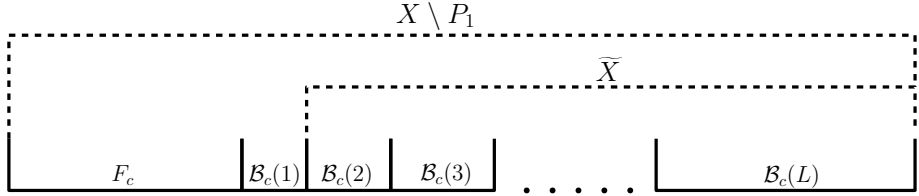}
  \caption{The sets used in the proof of the lower bound in \lemref{Phase2:eo-max}. The horizontal axis represents the distance of the points in the set from $\ca$, in decreasing order. } 
  \label{fig:lower_bound_fig}
\end{figure} 
First, we prove the first inequality in \eqref{string} by removing far points from $P_2$. 
Recall that $\fl:=\far_{4(k+1) \phia}(X \setminus P_1,\ca)$ are the furthest points from $\ca$ in $X \setminus P_1$. Since $\event$ holds, $\fl$ is well-represented in $P_2$ for $X \setminus P_1$. Since $|P_2|/|X \setminus P_1| = \alpha/(1-\alpha) \geq \alpha$, this implies that
\[
|P_2 \cap \fl|  \geq \frac{\alpha}{2} |\fl| = 2 (k+1) \phia \alpha.
\]
Hence,  the $2(k+1)\phia \alpha$ furthest points from $\ca$ in $P_2$ are in $\fl$. Formally, $\far_{2(k+1)\phia \alpha}(P_2,\ca) \subseteq \fl$. Therefore,
$R_{2(k+1)\phia \alpha}(P_2,\ca) \geq R(P_2 \setminus \fl,\ca)$,
which proves the first inequality of \eqref{string}. 

Next, we prove the last inequality of \eqref{string}. 
By property \ref{bin-b1} of linear bin divisions, we have $|\binsc(1)| \leq \frac{5}{2}\cdot \frac{\phia}{3} \leq \phia$. Therefore,
\[
  |\fl \cup \binsc(1)| \leq  |\fl|+|\binsc(1)| \leq 4(k+1) \phia + \phia %\leq 4(k+1) \phia+\phia <
  \leq 5(k+1)\phia.
\]
In addition, by the definition of $\fl$ and property \ref{bin-order} of linear bin divisions,  the points in $\fl \cup \binsc(1)$ are the furthest from $\ca$ out of the points in $X \setminus P_1$. Hence, $\fl \cup \binsc(1) \subseteq \far_{5(k+1)\phia}(X \setminus P_1,\ca)$. 
It follows that $\tX \supseteq (X \setminus P_1) \setminus \far_{5(k+1)\phia}(X \setminus P_1,\ca)$.
Therefore, \mbox{$R(\tX,\ca) \geq R_{5(k+1)\phia}(X \setminus P_1,\ca)$}, which proves the last inequality in \eqref{string}. 

To complete the proof, we have left to prove the second inequality of \eqref{string}, which states that $R(P_2 \setminus \fl,\ca) \geq \frac{\alpha}{3}R(\tX,\ca).$ 
We have 
\begin{align}
R(P_2 \setminus \fl,\ca)=\sum_{i \in [L]} R( \binsc(i) \cap (P_2 \setminus \fl),\ca)=\sum_{i \in [L]} R( \binsc(i) \cap P_2 ,\ca).
\end{align}
The last equality follows since $\bins$ is a partition of $(X \setminus P_1) \setminus \fl$, thus $\bins(i) \cap \fl = \emptyset$.
By \lemref{claim1}, since $\event$ holds, $\forall i \in [L-1], R( \binsc(i) \cap P_2 ,\ca)\geq \frac{\alpha}{3}R(\binsc(i+1) ,\ca)$. 
It follows that 
\begin{align}
R(P_2 \setminus \fl,\ca)& \geq  \sum_{i \in [L-1]} R( \binsc(i) \cap P_2 ,\ca) \geq
 \frac{\alpha}{3}\sum_{i=2}^L  R(\binsc(i) ,\ca) =\frac{\alpha}{3} R(\tX,\ca). \notag 
\end{align}
The last equality follows from the definition of $\tX$.
This completes the proof of the second inequality of \eqref{string}, thus proving the lemma.
\end{proof}

Based on the lemmas above, \lemref{boundingN} can now be proved.
\begin{proof}[Proof of \lemref{boundingN}] 
  \newcommand{\overth}{\mathcal{T}}
  Denote the set of points in $P_3$ that exceed the distance threshold used in \algname\ by
  \[
    \overth := \{ x \in P_3 \mid \rho(x,\ca) > \eo/(k\tr) \}.
  \]
  Then $N = |\overth|$. Recall that $\fs := \far_{5(k+1) \phia}(X \setminus P_1, \ca)$. We have $N \leq |\overth \setminus F_d| + |P_3 \cap \fs|$.
    To upper bound $|\overth \setminus F_d|$, note that from the definition of $\fs$ and $\overth$, 
    \[
      R_{5  (k+1) \phia}(X \setminus P_1, \ca) = R( (X \setminus P_1) \setminus \fs, \ca) \geq R(\overth \setminus \fs, \ca) \geq |\overth \setminus F_d|\cdot \eo/(k\tr).
      \]
      On the other hand, by \lemref{Phase2:eo-max}, $\eo \geq \frac{1}{9} R_{5  (k+1) \phia}( X \setminus P_1, \ca)$. It follows that $|\overth \setminus F_d| \leq 9k\tr$.

      To upper bound $|P_3 \cap \fs|$, consider first the case where $\fs$ is trivial. In this case, $|X \setminus P_1| < 5(k+1)\phia$, thus
      \[
        |P_3 \cap \fs| \leq  |P_3| = |X \setminus P_1| \cdot (|P_3|/|X \setminus P_1|) \leq 5(k+1)\phia \cdot \frac{\gamma}{1-\alpha}.
        \]
        If $\fs$ is non-trivial, then since $\event$ holds, $\fs$ is well-represented in $P_3$ for $X \setminus P_1 $. Since we have  \mbox{$|P_3|/ |X \setminus P_1 | = \frac{\gamma}{1-\alpha}$}, it follows that
\[
  |P_3 \cap \fs| \leq \frac{3}{2}\frac{\gamma}{1-\alpha}|F_d| = \frac{3}{2}\frac{\gamma}{1-\alpha} \cdot 5 (k+1) \phia < 8 (k+1) \phia \frac{\gamma}{1-\alpha}.
  \]
 Therefore, in both cases, $|P_3 \cap \fs| \leq 8 (k+1) \phia \frac{\gamma}{1-\alpha}$.
Combining this with the upper bound on $|\overth \setminus \fs|$, the statement of the lemma follows. 
\end{proof}

\subsection{Risk approximation for small and large optimal clusters} \label{sec:small_clusters_analysis}

In this section, we prove \lemref{small_clusters_analysis} and \lemref{large_clusters_analysis}. The former lemma upper bounds the risk of the selected centers on optimal clusters whose center appears in the third stage, and is  used for bounding the risk of points in small optimal clusters. The latter lemma upper bounds
the risk of the selected centers on large optimal clusters, under an appropriate setting of algorithm parameters.

First, we provide an auxiliary lemma, which will be used to prove both of these lemmas. It shows that all points appearing in the last phase are close to some center in the set of selected centers. 
 \begin{lemma} \label{lem:all_x_risk_bounded}
 For all $x \in P_3,$ $\rho(x,\Tout)\leq 2\eo/(k \tr)$.
 \end{lemma}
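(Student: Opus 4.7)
The plan is a direct case analysis on the conditions in line \ref{line-nv-thr} of \algname, combined with a one-line triangle inequality argument. Fix any $x \in P_3$, and let $c_i$ be the center in $\ca$ closest to $x$, i.e., $i = \argmin_{j \in [\ka]} \rho(x, c_j)$.

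I would split into two cases based on the distance $\rho(x, c_i)$. In the first case, $\rho(x, c_i) > \eo/(k\tr)$. Then the first disjunct of the condition in line \ref{line-nv-thr} is satisfied when $x$ is processed, so $x$ is selected into $\Tout$, giving $\rho(x, \Tout) = 0$. In the second case, $\rho(x, c_i) \leq \eo/(k\tr)$. Here we have two sub-cases: either $x$ is itself selected (because one of the other two disjuncts in line \ref{line-nv-thr} holds, namely $\neg \undert_i$ or $n_i \leq \iflarge$), in which case again $\rho(x, \Tout) = 0$; or $x$ is not selected. In the latter sub-case, the negation of all three disjuncts holds at the time $x$ is processed, and in particular $\undert_i = \texttt{TRUE}$ already.

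The key observation, which is the crux of the argument, is that $\undert_i$ can only be flipped to \texttt{TRUE} on line \ref{line-nv-increasingN_and_bi}, which occurs only when some previously observed point $y \in P_3$ was selected (so $y \in \Tout$) and satisfied $\rho(y, c_i) \leq \eo/(k\tr)$. By the triangle inequality,
\[
\rho(x, \Tout) \leq \rho(x, y) \leq \rho(x, c_i) + \rho(c_i, y) \leq \frac{\eo}{k\tr} + \frac{\eo}{k\tr} = \frac{2\eo}{k\tr}.
\]
Combining both cases yields the bound for every $x \in P_3$. There is no real obstacle here, since the statement follows purely from unwinding the selection rule and applying the triangle inequality once; the only subtlety is remembering that $\undert_i$ is initialized to \texttt{FALSE} and can only be set to \texttt{TRUE} via the selection of a nearby point, which forces the existence of the point $y$ used above.
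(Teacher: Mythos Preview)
Your proposal is correct and follows essentially the same approach as the paper's proof: both split on whether $x$ is selected, and in the non-selected case use that $\undert_i = \texttt{TRUE}$ forces the existence of a previously selected point $y$ with $\rho(y,c_i) \leq \eo/(k\tr)$, then finish with the triangle inequality. The paper organizes the cases slightly more compactly (selected vs.\ not selected, rather than your outer split on the distance threshold), but the argument is identical.
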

 \begin{proof}
 If $x$ is selected by \algref{basic_algorithm}, then $x \in \Tout$, so $\rho(x, \Tout) =0$. Thus, in this case, the lemma trivially holds. 
 Now, suppose that $x$ is not selected as a center. 
Recall that $\ca = \{c_1,\ldots,c_{\ka}\}$, and denote $i := \argmin_{i \in [\ka]}\rho(x, c_i)$. Due to the selection conditions on line   \ref{line-nv-thr} of \algref{basic_algorithm}, we have that, since $x$ was not selected,  $\rho(x,c_i) \leq \eo/(k \tr)$ and $\undert_i = \texttt{TRUE}$ when $x$ is observed.  
From the latter, it follows that some $y \in C_i$ was selected such that $\rho(c_i,y)\leq \eo/(k \tr)$. 
Therefore, $\rho(x,\Tout) \leq \rho(x,y) \leq \rho(x,c_i)+\rho(c_i,y) \leq 2\eo/(k \tr)$. 
 \end{proof}

The proof of \lemref{small_clusters_analysis} is now almost immediate.
\begin{proof}[Proof of \lemref{small_clusters_analysis}]
Fix $i \in [k]$. 
Observe that by the triangle inequality,
\begin{equation*}%\label{eq:R1}
  R(C_i^*,\Tout)  \leq R(C_i^*,\{c_i^*\}) + |C_i^*|\rho(c_i^*,\Tout).
\end{equation*}
By the assumption of the lemma, $c_i^* \in P_3$. Applying \lemref{all_x_risk_bounded} to $x:=c_i^*$, we get that $\rho(c_i^*,\Tout) \leq 2\eo/(k \tr)$. Combined with the inequality above, this proves the lemma.
\end{proof}

We now turn to prove \lemref{large_clusters_analysis}, which upper bounds the risk of the selected centers on large optimal clusters. 
First, we relate the risk of points from $\alpha$-large optimal clusters in the third phase to the risk of the clustering $\ca$ calculated in the first phase.
\begin{lemma} \label{lem:T'approxca} 
If $\event$ holds, $\tr=\phia$ and $\iflarge=\log (8\ka/\delta)$, then $R(\largep \cap P_3,\Tout) \leq 13 R(\largep,\ca).$
\end{lemma}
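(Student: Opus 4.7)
The plan is to exploit the specific parameter settings $\tr=\phia$ and $\iflarge=\log(8\ka/\delta)$ together with part \ref{event-phase3} of $E$. First I would observe that with $\iflarge=\log(8\ka/\delta)$, the selection rule $n_i \leq \iflarge$ in \algref{basic_algorithm} forces \algname\ to select every one of the first $\iflarge$ points observed from each $C_i \cap P_3$. By part \ref{event-phase3} of $E$, at least one of these guaranteed-selected points, which I will call $y_i \in \Tout \cap C_i$, is closer to $c_i$ than at least half of $C_i \cap P_3$. A standard median/mean inequality then yields the clean bound $\rho(c_i, y_i) \leq 2 R(C_i \cap P_3, \ca)/|C_i \cap P_3|$ whenever $|C_i \cap P_3| \geq 1$.

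Using this $y_i$, for each $x \in \largep \cap P_3$ with nearest center $c_{i(x)} \in \ca$, the triangle inequality gives $\rho(x, \Tout) \leq \rho(x, y_{i(x)}) \leq \rho(x, c_{i(x)}) + \rho(c_{i(x)}, y_{i(x)})$. To get a tight bound, I would split into a ``far'' case $\rho(x, c_{i(x)}) \geq \rho(c_{i(x)}, y_{i(x)})$, where $\rho(x, \Tout) \leq 2 \rho(x, c_{i(x)})$, and a ``close'' case where $\rho(x, \Tout) \leq 2 \rho(c_{i(x)}, y_{i(x)})$. The far contributions sum to at most $2 R(\largep \cap P_3, \ca)$. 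For the close contributions, the median property forces the number of close points in $C_i \cap P_3$ to be at most $|C_i \cap P_3|/2$; combined with the median bound on $\rho(c_i, y_i)$, the close contribution per cluster is at most $R(C_i \cap P_3, \ca)$, and summing over $i$ gives at most $2 R(P_3, \ca)$.

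The main obstacle is the last step: bounding $R(P_3, \ca)$ by a constant times $R(\largep, \ca)$. Writing $R(P_3, \ca) = R(\largep \cap P_3, \ca) + R(\smallp \cap P_3, \ca)$, the large-cluster part is immediate since it is at most $R(\largep, \ca)$. The small-cluster part is not generically bounded by $R(\largep, \ca)$, because outliers in $\smallp$ can sit arbitrarily far from $\ca$; this is where the full force of the event $E$ and \lemref{Phase2:eo-max} must come in. Specifically, the lower bound $\eo \geq \tfrac{1}{9} R_{5(k+1)\phia}(X \setminus P_1, \ca)$ together with the upper bound $\eo \leq R_{k\phia}(X,\ca) \leq R(\largep, \ca)$ yields $R_{5(k+1)\phia}(X \setminus P_1, \ca) \leq 9 R(\largep, \ca)$, which controls the risk on all but the $5(k+1)\phia$ furthest points of $X \setminus P_1$. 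For the remaining extreme far points, I would use well-representation of $\fs$ in $P_3$ (part \ref{event-well-represent-fars} of $E$) to bound $|P_3 \cap \fs|$, and combine this with the fact that the extra $\ka - k = 38\log(32k/\delta)$ outlier centers built into $\ca$ ensure these far points' $\rho(\cdot,\ca)$ is not too much larger than what the remaining budget allows. Putting the pieces together and tracking constants carefully should yield the overall factor $13$.
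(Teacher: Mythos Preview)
Your plan has a genuine gap in the final step. You correctly identify that bounding $R(\smallp\cap P_3,\ca)$ is the obstacle, but the proposed fix does not close it. The inequality $R_{5(k+1)\phia}(X\setminus P_1,\ca)\leq 9\,R(\largep,\ca)$ indeed controls the contribution of $(X\setminus P_1)\setminus\fs$, but there is no way to bound $R(\fs\cap P_3,\ca)$: the points in $\fs$ are, by definition, the furthest from $\ca$ in $X\setminus P_1$, and their distances can be arbitrarily large. Your appeal to the ``extra $\ka-k$ outlier centers'' is misplaced --- those extra centers were used in \lemref{claim-phase1} to absorb outliers \emph{in $P_1$}; they provide no control on outliers appearing later in $P_3$. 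Since your median/mean bound $\rho(c_i,y_i)\leq 2R(C_i\cap P_3,\ca)/|C_i\cap P_3|$ lets a single $\smallp$-outlier in $C_i\cap P_3$ blow up the numerator, the whole close-case argument collapses.

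The paper sidesteps this obstacle with a different decomposition: instead of splitting points into far/close, it splits the clusters $C_i$ into \emph{light} ($|C_i\cap P_3\cap\smallp|\geq\tfrac{1}{4}|C_i\cap P_3|$) and \emph{heavy}. Light clusters together contain at most $4|\smallp|\leq 4k\phia$ points in $P_3$; for these the paper does \emph{not} use part~\ref{event-phase3} at all, but instead invokes \lemref{all_x_risk_bounded} to get $\rho(x,\Tout)\leq 2\eo/(k\phia)$ for every $x\in P_3$, yielding a total of $8\eo\leq 8R(\largep,\ca)$. For heavy clusters, at least $\tfrac{3}{4}$ of $C_i\cap P_3$ lies in $\largep$, while part~\ref{event-phase3} guarantees at least $\tfrac12$ of $C_i\cap P_3$ lies at distance $\geq r_i:=\rho(c_i,\Tout)$; intersecting, at least $\tfrac{1}{4}|C_i\cap P_3|$ points are simultaneously in $\largep$ and at distance $\geq r_i$, so $r_i\leq 4\,R(C_i\cap P_3\cap\largep,\{c_i\})/|C_i\cap P_3\cap\largep|$. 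This Markov-type step uses only $\largep$-risks and gives the factor~$5$ on heavy clusters. The crucial difference from your attempt is that the paper never lets $R(C_i\cap P_3,\ca)$ (with its uncontrolled $\smallp$-contribution) enter the bound; the light/heavy split is designed precisely so that the median argument can be run entirely inside $\largep$.
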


\begin{proof}
  \newcommand{\light}{\texttt{Lt}}
  \newcommand{\heavy}{\texttt{Hv}}
Recall that $C_1,\ldots,C_{\ka}$ are the clusters induced by the centers in $\ca$. 
We distinguish between \emph{light clusters} and \emph{heavy clusters}. Light clusters are those that include many points from small optimal clusters in the third phase. Formally, the set of light clusters is defined as
\[
  \light :=\{i \in [\ka] \mid |C_i \cap P_3 \cap \smallp|/|C_i\cap P_3| \geq 1/4\}.
\]
Heavy clusters are non-light clusters, denoted $\heavy:= [\ka] \setminus \light$.
We have
\begin{equation}\label{eq:both}
  R(\largep\cap P_3, \Tout) \leq \sum_{i \in \light}R(C_i \cap P_3, \Tout) + \sum_{i \in \heavy}R(C_i \cap P_3 \cap \largep, \Tout).
\end{equation}
We bound each of these terms separately. 

For light clusters, we show that these clusters are relatively small, and bound the risk contributed by each point in the cluster. The total number of points in light clusters is 
\begin{align} \label{eq:boundlight}
  \sum_{i \in \light}|C_i \cap P_3| \leq 4\sum_{i \in \light}|C_i \cap P_3 \cap \smallp| \leq 4 | \smallp| \leq 4k\phia.
\end{align}
Moreover, for each $x \in C_i \cap P_3$ for $i \in \light$, we have by \lemref{all_x_risk_bounded} that $\rho(x,\Tout) \leq 2\eo/(k\tr)$. By the assumptions of the lemma, $\tr:=\phia$. Hence, combining with \eqref{boundlight}, we get
\begin{align}\label{eq:light}
\sum_{i \in \light}R(C_i \cap P_3,\Tout) \leq   \frac{2\eo}{k\phia}  \sum_{i \in \light}|C_i \cap P_3| \leq 8 \eo \leq 8R(\largep,\ca).
\end{align} 
The last inequality follows from \lemref{Phase2:eo-max}.

For heavy clusters, let $i \in \heavy$. Denote $r_i:=\rho(c_i,\Tout)$.
By the triangle inequality,
\begin{align*}
R(C_i \cap P_3 \cap \largep,\Tout) &\leq R(C_i \cap P_3 \cap \largep,\{c_i\})+|C_i \cap P_3 \cap \largep| \cdot r_i.
%\\&\leq R(C_i \cap P_3 \cap \largep,c_i)+r_i|C_i \cap P_3| \label{eq:boundt'}
\end{align*}
Let $R_i := \{ x \in C_i \cap P_3 \mid \rho(x,c_i) \geq r_i\}$.
To upper bound $r_i$, note that
\[
  R(C_i \cap P_3 \cap \largep,\{c_i\}) \geq |R_i\cap \largep| \cdot r_i.
\]
Therefore,
\[
  r_i \leq \frac{R(C_i \cap P_3 \cap \largep,\{c_i\})}{|R_i\cap \largep|}.
  \]
It follows that
\begin{equation}\label{eq:boundri}
  R(C_i \cap P_3 \cap \largep,\Tout) \leq R(C_i \cap P_3 \cap \largep,\{c_i\})\cdot \left(1+ \frac{|C_i \cap P_3 \cap \largep|}{|R_i \cap \largep|}\right).
\end{equation}

We now lower-bound  $|R_i \cap \largep|$. \algref{basic_algorithm} selects the first $M = \log (8\ka/\delta)$ points from cluster $C_i$ observed in the third phase. From part \ref{event-phase3} of the event $\event$, we have that at least one of those points, call it $x^*$, is closer to $c_i$ than at least half of the points in $C_i \cap P_3$. Formally, $\rho(c_i, x^*) \leq \rho(c_i, x)$ for at least half of the points $x \in C_i \cap P_3$. Since $x^* \in \Tout$, we have $r_i \leq \rho(c_i, x^*)$, and so $r_i \leq \rho(c_i, x)$ for at least half of the points $x \in C_i \cap P_3$. It follows that $|R_i| \geq |C_i \cap P_3|/2$. 

On the other hand, since $i \in \heavy$, we have
$|R_i \cap \smallp| \leq |C_i \cap P_3 \cap \smallp| \leq |C_i \cap P_3|/4.$
Therefore, $|R_i \cap \largep| = |R_i| - |R_i \cap \smallp| \geq |C_i \cap P_3|/4 \geq |C_i \cap P_3 \cap \largep|/4$.
Combined with \eqref{boundri}, we get 
\[
  \forall i \in \heavy, \quad R(C_i \cap P_3 \cap \largep,\Tout) \leq 5R(C_i \cap P_3 \cap \largep,c_i).
\]
It follows that $\sum_{i \in \heavy} R(C_i \cap P_3 \cap \largep,\Tout) \leq 5R(\largep,\ca)$. Combining this with \eqref{both} and \eqref{light}, we get the statement of the lemma. 
\end{proof}

Based on the lemma above, the proof of \lemref{large_clusters_analysis} can be now provided. 
\begin{proof}[Proof of \lemref{large_clusters_analysis}]

To bound $R(\largep,\Tout)$, we bound $R(\largep \cap P_3,\Tout)$ and $R(\largep \cap (P_1 \cup P_2),\Tout)$ separately. The former is bounded using \lemref{T'approxca}, which gives $R(\largep \cap P_3,\Tout) \leq 13 R(\largep,\ca).$

To bound $R(\largep \cap (P_1 \cup P_2),\Tout)$, we first show that at least half of the points in each $\alpha$-large optimal cluster $C_i^*$ are in $P_3$. 
Fix $i \in \largec$. Since $\event$ holds, $C_i^*$ is well-represented in $P_1 \cup P_2$ for $X$. Since $|P_1 \cup P_2|/ |X| =2\alpha$ and $\alpha \leq 1/6$, it follows that $|C_i^* \cap (P_1 \cup P_2)|/|C_i^*|\leq \frac{3}{2} \cdot 2\alpha \leq 3\alpha  \leq 1/2$. Since $\gamma=1-2\alpha$, we have $P_1 \cup P_2 \cup P_3 = X$, hence $|C_i^* \cap P_3|/|C_i^*| \geq 1/2$.
Now, define an injection $\mu:P_1 \cup P_2 \rightarrow P_3$ such that $x \in C_i^*$ if and only if $\mu(x) \in C_i^*$. Such an injection exists since most of the points in each $\alpha$-large optimal cluster are in $P_3$.
For each $i \in \largec$ and each $x \in C_i^* \cap (P_1 \cup P_2)$, we have
\begin{equation}\label{eq:mu}
  \rho(x, \Tout) \leq \rho(x,c^*_i) + \rho(c^*_i,\mu(x)) + \rho(\mu(x),\Tout).
\end{equation}
Now, since $\mu$ is an injection, we have that for any $T \subseteq X$,
\[
  R(\{\mu(x) \mid x \in C_i^* \cap (P_1 \cup P_2)\}, T) \leq R(C_i^* \cap P_3, T).
  \]
  Summing  \eqref{mu} over $x \in C_i^* \cap (P_1 \cup P_2)$, it follows that
  \begin{align*}
    R(C_i^* \cap (P_1 \cup P_2), \Tout) &\leq R(C_i^* \cap (P_1 \cup P_2), \{c^*_i\}) + R(C_i^* \cap P_3, \{c^*_i\}) + R(C_i^* \cap P_3, \Tout)\\
    &= R(C_i^*, \OPT) + R(C_i^* \cap P_3, \Tout).
  \end{align*}
  Summing over $i \in \largec$, we get
  \begin{align*}
    R(\largep \cap (P_1 \cup P_2), \Tout) \leq R(\largep, \OPT) + R(\largep \cap P_3, \Tout).
  \end{align*}
  It follows that 
  \begin{align*}
    R(\largep, \Tout) &\leq R(\largep \cap (P_1 \cup P_2), \Tout) + R(\largep \cap P_3, \Tout) \\
    &\leq  R(\largep, \OPT) + 2R(\largep \cap P_3, \Tout).
  \end{align*}
  Combined with the upper bound on $R(\largep \cap P_3,\Tout)$ from \lemref{T'approxca}, we get the statement of the lemma.
\end{proof}

This is the final lemma required for proving \thmref{gen-first-algorithm}. Thus, this completes the proof of the main theorem, \thmref{final-algorithm}.

\end{document}

\end{document}